\documentclass{article} 
\usepackage{iclr2026_conference,times}


\usepackage{amsmath,amsfonts,bm}









\def\eqref#1{equation~\ref{#1}}









\def\1{\bm{1}}










\DeclareMathAlphabet{\mathsfit}{\encodingdefault}{\sfdefault}{m}{sl}
\SetMathAlphabet{\mathsfit}{bold}{\encodingdefault}{\sfdefault}{bx}{n}













\usepackage{hyperref}
\usepackage{url}

\usepackage{amsmath, amssymb}
\usepackage{amsfonts}
\usepackage{bm}
\usepackage{amsthm} 

\usepackage{booktabs}   
\usepackage{graphicx}   
\usepackage[table]{xcolor} 
\definecolor{mygray}{gray}{0.92}
\usepackage{makecell}
\usepackage{wrapfig}
\usepackage{threeparttable}
\usepackage{subcaption} 
\usepackage{multirow}
\usepackage{tabularx} 
\usepackage{array}    
\usepackage{algorithm}
\usepackage{algpseudocode}
\usepackage{tcolorbox}
\usepackage[T1]{fontenc}
\usepackage{titletoc}
\usepackage{afterpage}
\usepackage{enumitem} 

\newcolumntype{C}{>{\centering\arraybackslash}X}

\title{Spotlight on Token Perception for Multimodal Reinforcement Learning}



\author{Siyuan Huang$^{1,2}$,
Xiaoye Qu$^{1}$\protect\setcounter{footnote}{1}\thanks{Corresponding authors.}  ,
Yafu Li$^{3}$,
Yun Luo$^{1}$,\\\textbf{Zefeng He$^{4}$, Daizong Liu$^{5}$, Yu Cheng$^{3\dagger}$}\\
$^1$Shanghai AI Laboratory,
$^2$Shanghai Jiao Tong University,\\
$^3$The Chinese University of Hong Kong,
$^4$Nanjing University,
$^5$Wuhan University\\
\texttt{\{huangsiyuan2, quxiaoye\}@pjlab.org.cn, 
chengyu@cse.cuhk.edu.hk}
}

%

\iclrfinalcopy 
\begin{document}

\maketitle

\begin{abstract}
While Reinforcement Learning with Verifiable Rewards (RLVR) has advanced the reasoning capabilities of Large Vision-Language Models (LVLMs), most existing methods in multimodal reasoning neglect the critical role of visual perception within the RLVR optimization process.
In this paper, we undertake a pioneering exploration of multimodal RLVR through the novel perspective of token perception, which measures the visual dependency of each generated token.  
With a granular analysis of Chain-of-Thought (CoT) processes, we uncover two key insights: 
first, token perception in a rollout trajectory is sparsely distributed,  where only a small fraction of tokens have high visual dependency for visually-grounded reasoning; 
second, different trajectories exhibit significant divergence in their overall visual dependency.
Based on these observations, we propose \textbf{V}isually-\textbf{P}erceptive \textbf{P}olicy \textbf{O}ptimization (\textbf{VPPO}), a novel policy gradient algorithm that explicitly leverages token perception to refine the learning signal.
Specifically, VPPO achieves this through a dual mechanism: it reweights a trajectory's advantage by its overall visual dependency, and focuses policy updates exclusively on perceptually pivotal tokens. On a comprehensive suite of eight perception and reasoning benchmarks, VPPO demonstrates substantial gains over leading open-source RL-tuned models, with its effectiveness consistently validated across 7B and 32B model scales. Our findings not only establish a new token-level perceptual perspective for analyzing multimodal RLVR but also present a novel and effective optimization strategy to significantly enhance the multimodal reasoning capabilities of LVLMs. 
Our code is available at \url{https://github.com/huaixuheqing/VPPO-RL}.
\end{abstract}

\newcommand{\vppo}{\text{VPPO}}
\newcommand{\grpo}{\text{GRPO}}
\newcommand{\dapo}{\text{DAPO}}
\newcommand{\rlvr}{\text{RLVR}}

\section{Introduction}

Reinforcement learning from verifiable rewards (RLVR) \citep{zhang2025survey}, particularly with online algorithms like Group Relative Policy Optimization (GRPO), has dramatically advanced the reasoning capabilities of Large Language Models (LLMs) in text-centric domains, such as math and code \citep{shao2024deepseekmath, guo2025deepseek, openai2024learningtoreason, team2025kimi, yang2025qwen3, anthropic2025claude}.
Recently, many works have attempted to translate this success to Large Vision-Language Models (LVLMs). These efforts primarily focus on three directions: data-centric enhancements~\citep{li2025truth, liang2025modomodo, liu2025noisyrollout, yao2025r1, chen2025g1, meng2025mm, huang2025vision, yang2025r1}, reward-centric engineering~\citep{shen2025vlm, xia2025visionary, wang2025skywork, xiao2025advancing, yu2025perception, wan2025srpo}, and other algorithmic adjustments~\citep{wang2025vl, zhao2025absolute}.

However, prevailing RLVR frameworks for LVLMs largely neglect the critical role of visual perception in the optimization process. Effective reasoning is contingent upon accurate perception, which provides the essential grounding for logical deduction \citep{xiao2025advancing}. 
The geometry problem in Figure \ref{fig:intro} exemplifies this dependency. {Given a question: ``In circle~$\odot O$, $AC$ is parallel to $OB$, and $\angle BOC=50{}^\circ$. What is the measure of $\angle OAB$?''}
To correctly answer this question, a critical insight should be derived from the visual diagram, namely 
segments $OA$ and $OB$ are radii of the circle $\odot O$, rendering $\triangle AOB$ isosceles. 
Therefore, without explicitly integrating perceptual ability into the core learning objectives, models cannot develop genuine multimodal reasoning capabilities \citep{yu2025perception,xiao2025advancing}.

In this paper, we analyze the perceptual mechanisms of multimodal RLVR through an innovative lens of token perception, investigating the impact of tokens with varying visual dependency on reasoning. 
With a granular analysis, we first point out that in the Chain-of-Thought (CoT)~\citep{wei2022chain} processes of multimodal reasoning, the token perception distribution in a rollout trajectory exhibits a distinct pattern, where the majority of tokens are generated with low visual dependency, while a critical minority of tokens emerge with high dependency.
After aggregating the token perception at the trajectory level, we further observe that different reasoning trajectories also exhibit significant divergence in their overall perceptual quality, as only a part of trajectories are genuinely perception-driven paths. Although those paths without significant visual perception may still fortuitously arrive at the correct answer, the resulting models will exhibit weak multimodal perception capabilities. 
These observations pinpoint a foundational flaw inherited from text-based RLVR, i.e, existing implementations directly train over all tokens with limited understanding of which tokens actually facilitate multimodal perception and reasoning. 
The indiscriminate broadcasting of a single, coarse reward to every trajectory and token hinders further performance gains by failing to prioritize critical perception-related trajectories and tokens.

\begin{wrapfigure}{r}{0.48\textwidth} 
  \vspace{-20pt} 
  \centering
  \includegraphics[width=0.48\textwidth]{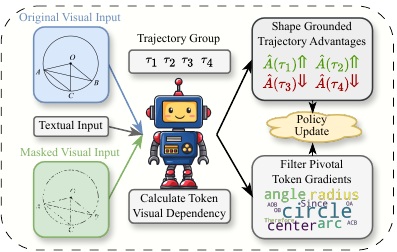}
    \caption{
        Our \vppo{} framework explicitly relies on token visual dependency to shape trajectory advantages and filter token gradients.
    }
  \label{fig:intro}
  \vspace{-10pt} 
\end{wrapfigure}

Building upon the above discovery of token perception, 
we introduce \textbf{V}isually-\textbf{P}erceptive \textbf{P}olicy \textbf{O}ptimization (\textbf{\vppo}), a novel policy gradient algorithm to explicitly integrate the token perception into the policy update of multimodal RL, as illustrated in Figure~\ref{fig:intro}. 
Specifically, our VPPO first quantifies the visual dependency of each token.
Based on this visual dependency, we devise two strategies.
First, to align the learning objective with perception-grounded trajectories, \vppo{} reweights each trajectory's advantage using its average dependency. {In this way, the learning signal is steered toward robust, perception-grounded reasoning paths over spurious shortcuts.} 
Second, to focus the learning signal on what truly matters, \vppo{} constructs a sparse gradient mask to concentrate policy updates exclusively on critical visually-grounded reasoning tokens. {This directly counters signal dilution, yielding a lower-variance gradient that leads to faster convergence and a stronger final policy.}
Notably, our VPPO can be seamlessly plugged into mainstream RLVR algorithms such as GRPO and DAPO.

To validate the effectiveness of our proposed VPPO, we conduct extensive experiments across a suite of eight challenging multimodal reasoning benchmarks, covering  mathematical, geometric, logical, and multi-discipline reasoning.
Based on Qwen2.5-VL series models, our 7B variant achieves a remarkable 19.2\% average accuracy improvement over baseline, also surpassing previous open-source leading methods. 
This robust performance seamlessly scales to the 32B model, which also brings a 7.6\% average accuracy improvement. 
Crucially, these performance gains are achieved alongside superior training stability and faster convergence, 
underscoring its efficiency and robustness.

To sum up, our main contributions are threefold:

\begin{itemize}[leftmargin=*, nosep]
    \item 
    In this paper, we make the first attempt to analyze the perceptual mechanisms of multimodal RLVR through an innovative lens of token perception. We discover that only a critical minority of tokens emerge with high visual dependency, while only a part of the trajectories are genuinely perception-driven paths. 

    \item We introduce \vppo{}, a novel policy gradient algorithm that explicitly focuses on token perception, leveraging visual dependency to align trajectory-level objectives and focus token-level gradient updates. In this way, the model spotlights perception while reasoning.  

    \item Our extensive experiments on eight perception and reasoning benchmarks demonstrate our VPPO's superior performance. We further show its robust scalability across 7B and 32B model scales. We also perform in-depth ablation studies to validate the critical designs in our VPPO.
    
\end{itemize}
\section{Related Work}

\vspace{-1.5mm}

\paragraph{Multimodal Reasoning.}
While LLMs have achieved powerful reasoning in text-only domains~\citep{guo2025deepseek}, their visual counterparts, Large Vision-Language Models (LVLMs)~\citep{bai2025qwen2, hurst2024gpt, team2024gemini}, still exhibit a significant performance gap when tasked with this complex integration~\citep{wang2024enhancing, dong2025insight,fang2024rethinking,fang2024not,fang2026rethinking,liu2024towards2,liu2024unsupervised}. Bridging this gap requires adapting the reasoning successes from text-only models to the unique demands of the multimodal space, where foundational algorithms like PPO~\citep{schulman2017proximal} and GRPO~\citep{shao2024deepseekmath} are being actively explored.

\vspace{-1.5mm}

\paragraph{Dominant Strategies in Multimodal RL.}
Most strategies focus on enhancing components external to the core learning algorithm. These approaches are largely either data-centric, focusing on the curation of visually-grounded datasets~\citep{bai2025univg, li2025truth, liang2025modomodo}, distillation of Chain-of-Thought data~\citep{chen2025vinci, huang2025vision, meng2025mm}, and design of training curricula~\citep{chen2025advancing, wei2025open}; or reward-centric, seeking to engineer more informative, perception-aware signals~\citep{wang2025perception, ma2025one, fan2025grit, liu2025visionreasoner, yang2025r1, xia2025visionary, chen2025grpo, wan2025srpo}. Other tactics include modifying rollouts or integrating external vision tools~\citep{liu2025noisyrollout, wang2025vl, zheng2025deepeyes}. While modality-agnostic algorithmic advances like Dynamic Sampling Policy Optimization (\dapo)~\citep{yu2025dapo} introduce effective techniques like dynamic sampling and clip-higher, they still broadcast a uniform learning signal to all tokens. Our VPPO counters this core limitation by intervening internally, using visual dependency to reweight trajectory advantages and focus gradient updates on pivotal moments of visually-grounded reasoning.

\vspace{-1.5mm}

\paragraph{Pivotal Tokens in Reasoning.}
Prior works in RL for large language models identify the pivotal tokens via high-entropy ``forking points''~\citep{wang2025beyond}, low-confidence error points targeted for exploration~\citep{vassoyan2025ignore}, or contrastive estimation between models trained on correct vs. incorrect data~\citep{lin2024critical}. However, for the multimodal domain, a pivotal token is not merely a logical fork but a critical moment of visually-grounded reasoning. In this paper, we introduce VPPO, the first multimodal RL algorithm designed to formally identify the perceptually pivotal tokens via dependency and then leverage them for targeted optimization.

\vspace{-1.5mm}
\newtheoremstyle{mydefstyle}
  {0.5em} 
  {0.5em} 
  {\itshape} 
  {} 
  {\bfseries} 
  {.} 
  {.5em} 
  {} 
\theoremstyle{mydefstyle}
\newtheorem{definition}{Definition}[section]
\newtheorem{theorem}{Theorem}[section]
\newtheorem{proposition}{Proposition}[section]
\newtheorem{lemma}{Lemma}[section]

\section{Method}

\vspace{-1.5mm}

\definecolor{mypurple}{HTML}{7A659A}
\definecolor{myred}{HTML}{CB6D6A}

\begin{figure*}[t]
  \centering
  \includegraphics[width=1\textwidth]{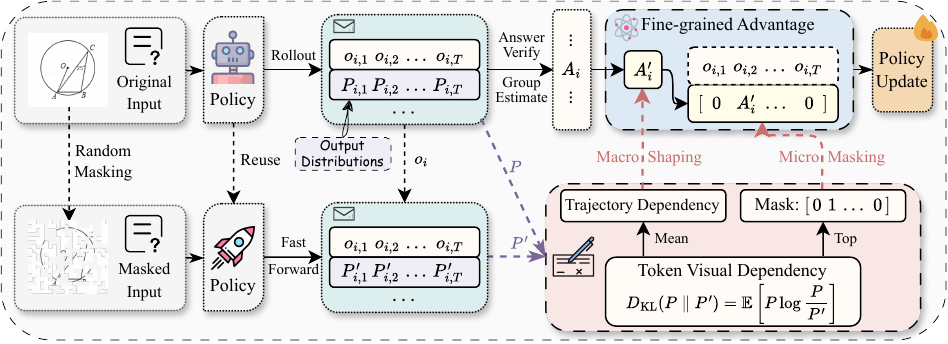}
    \caption{
        Overview of our \vppo{} framework.
        Given the original and masked image inputs, we first obtain the corresponding output distributions. 
        Then, we compute a token-level visual dependency score for each trajectory. 
        Subsequently, these token-level scores are used to generate two hierarchical control signals: at the macro-level, they are averaged into a trajectory-level dependency to shape the advantage, while at the micro-level, the top-$k$\% tokens are identified to create a sparse binary token gradient mask.
        In this way, the uniform advantage is transformed into a fine-grained, targeted learning signal for the final policy update.
    }
  \label{fig:method_overview}
  \vspace{-2.1mm}
\end{figure*}

In this paper, as shown in Figure~\ref{fig:method_overview}, we introduce \textbf{V}isually-\textbf{P}erceptive \textbf{P}olicy \textbf{O}ptimization (\textbf{\vppo}) that explicitly focuses on token perception by hierarchically shaping trajectory-level advantages and filtering token-level gradients.
This targeted signal modulation fosters more stable, efficient, and interpretable learning.

\subsection{Preliminary: Group Relative Policy Optimization (\grpo)}
\label{sec:preliminary}

Given a multimodal prompt $(I, q)$ consisting of a visual input $I$ and a textual query $q$, the old policy $\pi_{\theta_{\text{old}}}$ generates a group of $G$ responses, $\{o_i\}_{i=1}^G$. 
In the \rlvr{} framework, a binary reward $R_i \in \{0, 1\}$ is assigned to each complete response based solely on whether its final extracted answer matches the ground truth. While \grpo{} mitigates reward sparsity through a group-based advantage estimation, it remains fundamentally reliant on this coarse, outcome-based signal.

The advantage $\hat{A}_i$ for a response $o_i$ is its normalized reward:
\begin{equation}
    \hat{A}_i = \frac{R_i - \text{mean}(\{R_k\}_{k=1}^G)}{\text{std}(\{R_k\}_{k=1}^G)}
\end{equation}
The policy $\pi_\theta$ is then updated to maximize a clipped surrogate objective, where this uniform advantage $\hat{A}_i$ is broadcast to every timestep $t$:
\begin{equation}
    \mathcal{L}^{\text{\grpo}}(\theta) = \mathbb{E} \left[ \frac{1}{G} \sum_{i=1}^{G} \frac{1}{|o_i|} \sum_{t=1}^{|o_i|} \min \left( r_{i,t}(\theta) \hat{A}_i, \text{clip}(r_{i,t}(\theta), 1-\varepsilon, 1+\varepsilon) \hat{A}_i \right) \right]
\end{equation}
where $r_{i,t}(\theta) = \frac{\pi_\theta(o_{i,t} | I, q, o_{i,<t})}{\pi_{\theta_{\text{old}}}(o_{i,t} | I, q, o_{i,<t})}$ is the probability ratio.

While scalable, this outcome-based verification introduces a two-tiered limitation as follows:

\begin{enumerate}
    \item \textbf{Trajectory-Level Ambiguity:} It treats all correct solutions equally, failing to distinguish a reasoning path that is strongly grounded in visual evidence from one that arrives at the same answer through linguistic priors or hallucination.
    \item \textbf{Token-Level Uniformity:} The single, coarse reward is then applied indiscriminately to every token in the sequence, failing to selectively reward the specific, pivotal moments of visually-grounded reasoning that led to the correct outcome.
\end{enumerate}

\subsection{Visually-Perceptive Policy Optimization (\vppo)}
\label{sec:vppo}

To study the perception in multimodal reasoning, 
we first develop a metric to quantify visual dependency at each token and analyze the token perception in Section \ref{sec:dependency_quantification}. 
Subsequently, based on the token perception, we further aggregate them into the trajectory-level dependency and uncover key insights into their non-uniform nature in Section \ref{sec:insights}. 
Based on these findings, we introduce VPPO in Section \ref{sec:vppo_framework} for perception-centric multimodal reasoning. 

\subsubsection{Quantifying Token Visual dependency}
\label{sec:dependency_quantification}

We define a token's visual dependency as the information gain provided by the visual context. This is quantified by computing the Kullback-Leibler (KL) divergence between the policy's predictive distribution conditioned on the true image versus a perturbed version, formally measuring the distributional shift attributable to visual input. {The choice of KL divergence is validated in Appendix~\ref{app:dependency_method_ablation}, where it outperforms other metrics like Jensen-Shannon Divergence and simple probability shifts.}

\begin{definition}[\textbf{Token-level visual dependency}]
Let $I$ be the visual input and $I'$ be a non-informative, perturbed version. 
At a given state $s_t = (q, o_{<t})$, the visual dependency $\mathcal{S}$ at step $t$ is the KL divergence between the policy's output distributions conditioned on $I$ and $I'$:
\begin{equation}
    \mathcal{S}(s_t, I) := D_{\text{KL}}\left( \pi_\theta(\cdot | s_t, I) \parallel \pi_\theta(\cdot | s_t, I') \right).
\end{equation}
A high $\mathcal{S}$ value indicates that the image provides critical information for the token prediction at step $t$, marking it as a key moment of visually-grounded reasoning.
\end{definition}

With the above metric measuring the visual dependency for each token, we analyze the empirical distribution of token perception. 
To achieve this, we perform inference with the Qwen2.5-VL-7B model on the vision-dominant subset of the MathVerse~\citep{zhang2024mathverse} benchmark. We then compute the token visual dependency for every token across all generated trajectories and demonstrate their frequency distribution in Figure~\ref{fig:token_distribution}. 
The y-axis is on a logarithmic scale to better visualize the distribution's long tail, and a Kernel Density Estimation (KDE) curve is overlaid for easier visualization of the trend. This analysis leads to our first key insight:

\begin{wrapfigure}{r}{0.382\textwidth} 
  \vspace{0pt} 
  \centering
  \includegraphics[width=0.382\textwidth]{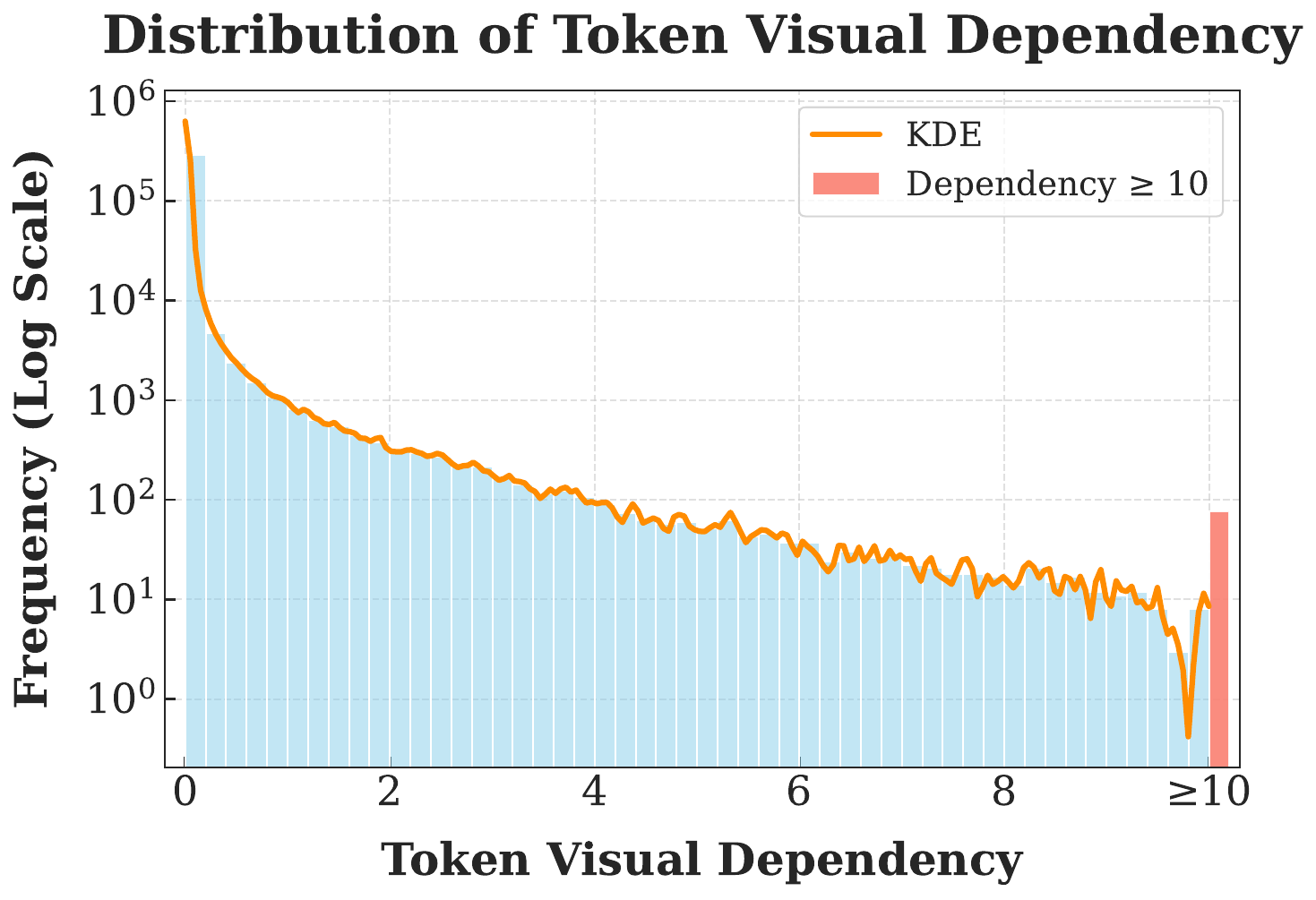}
    \caption{
        The skewed distribution of token-level visual dependency.
    }
  \label{fig:token_distribution}
  \vspace{-10pt} 
\end{wrapfigure}

\paragraph{Insight 1: Token Visual Dependency is Sparsely Distributed.}
Within the trajectory, visual reasoning is driven by a sparse set of pivotal tokens. 
{Figure~\ref{fig:token_distribution} shows the sparse distribution of token-level visual dependency. Plotted on a logarithmic y-axis, the frequency drops exponentially as dependency increases.} This highly skewed distribution confirms that only a small fraction of tokens are critical for visually-grounded reasoning. Further analysis confirms their semantic importance, as these high-dependency tokens predominantly consist of numbers, geometric concepts, and logical operators essential for the reasoning process. Broadcasting a uniform learning signal to all tokens thus dilutes the reward by rewarding many irrelevant, non-perceptual steps.

\subsubsection{Analysis of Reasoning Trajectories}
\label{sec:insights}

After analyzing the token-level dependency, we aggregate this metric to the trajectory level by defining the trajectory dependency $\bar{\mathcal{S}}(\tau)$ as the mean of the token-level dependency scores over a full trajectory $\tau$. This score represents the trajectory's overall reliance on visual evidence. 
{To explore its distribution, we use the same experimental setup as before, plotting the frequency of these trajectory dependency scores in Figure~\ref{fig:trajectory_distribution}. This reveals our second key insight:}

\begin{wrapfigure}{r}{0.382\textwidth} 
  \vspace{-17pt} 
  \centering
  \includegraphics[width=0.382\textwidth]{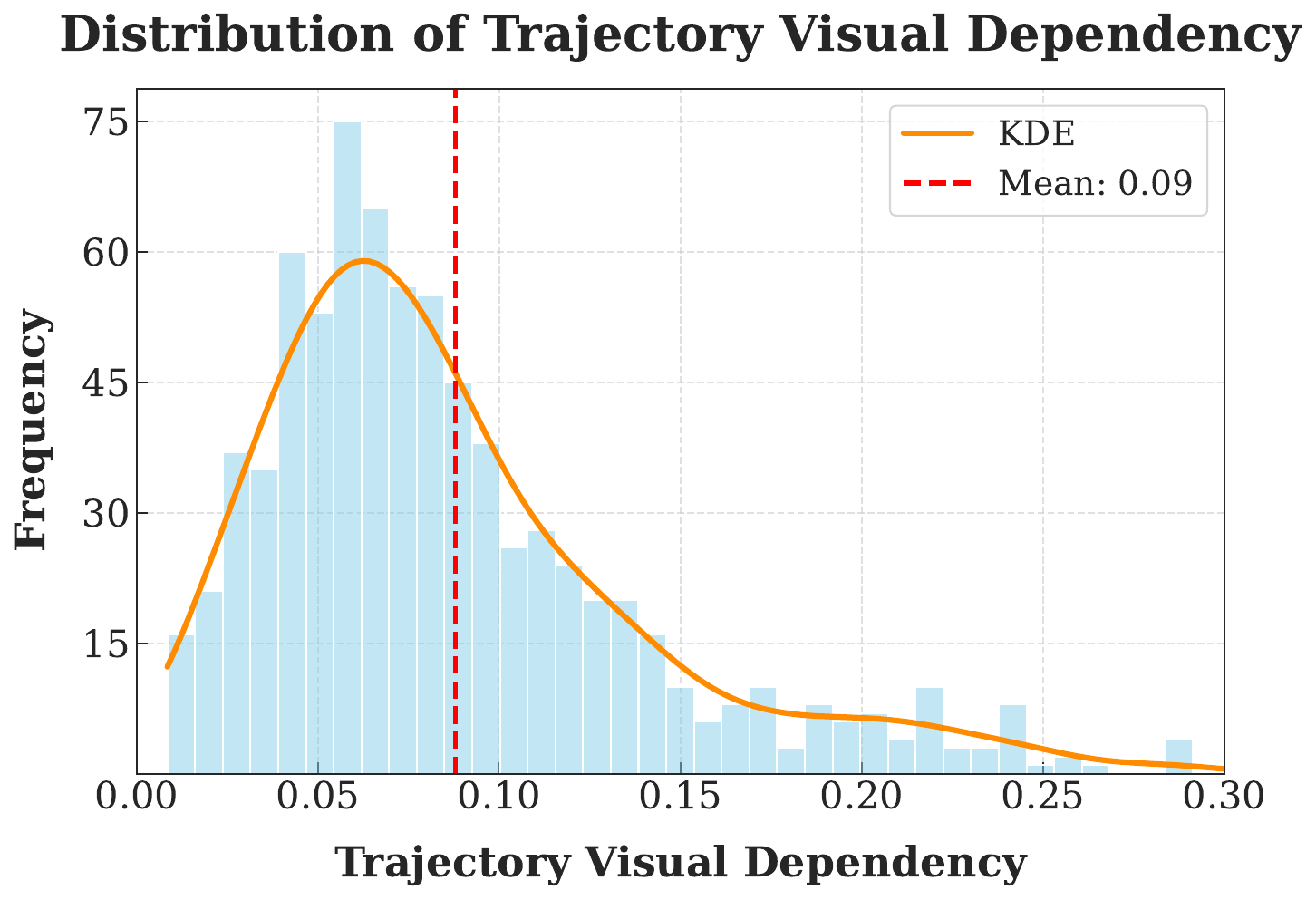}
  \vspace{-18pt}
  \caption{
  Distribution of trajectory dependency on perception.
  }
  \label{fig:trajectory_distribution}
  \vspace{-30pt} 
\end{wrapfigure}

\paragraph{Insight 2: Trajectories Exhibit Heterogeneous Visual Grounding.}
Not all correct reasoning paths are created equal. {As shown in Figure~\ref{fig:trajectory_distribution}, the distribution of trajectory-level visual dependency is heterogeneous. While loosely Gaussian, the distribution is right-skewed with a long tail, revealing that a distinct subset of high-dependency trajectories pulls the mean (0.09) to the right of the distribution's peak.}
Standard RL frameworks, by assigning a uniform reward, fail to distinguish the high perceptual informativeness of these trajectories, and thus cannot preferentially learn from genuine visually-grounded reasoning.

\subsubsection{VPPO policy gradient algorithm}
\label{sec:vppo_framework}

Based on these insights, we introduce VPPO, a novel gradient algorithm that reshapes the learning signal at two levels of granularity to explicitly focus on token perception.

\paragraph{Micro-level: Token-level Gradient Filtering (TGF).}
Inspired by \textbf{Insight 1}, we focus on the learning signal exclusively on pivotal tokens. For each trajectory $\tau_i$, we identify the set of indices $\mathcal{K}_i$ corresponding to the top-$k\%$ of tokens with the highest visual dependency scores. This set defines a binary gradient mask $m_{i,t}$:
\begin{equation}
    m_{i,t} = \mathbb{I}(t \in \mathcal{K}_i) = 
    \begin{cases} 
        1 & \text{if token } t \text{ is a pivotal visual-reasoning token} \\ 
        0 & \text{otherwise} 
    \end{cases}
\end{equation}
This mask ensures that policy gradients are computed only for the pivotal tokens that bridge vision and language, effectively filtering out noise from generic tokens and combating signal dilution.

\paragraph{Macro-level: Trajectory-level Advantage Shaping (TAS).}

Inspired by \textbf{Insight 2}, we prioritize learning from superior, high-dependency trajectories. We compute a shaping factor $\alpha(\tau_i)$ for each trajectory $\tau_i$ in a batch $\mathcal{B}$ by normalizing its trajectory dependency:
\begin{equation}
    \alpha(\tau_i) = \beta_{\min} + (\beta_{\max} - \beta_{\min}) \frac{\bar{\mathcal{S}}(\tau_i) - \min_{\tau_j \in \mathcal{B}} \bar{\mathcal{S}}(\tau_j)}{\max_{\tau_j \in \mathcal{B}} \bar{\mathcal{S}}(\tau_j) - \min_{\tau_j \in \mathcal{B}} \bar{\mathcal{S}}(\tau_j)}
\end{equation}
where $[\beta_{\min}, \beta_{\max}]$ is a scaling range. This factor rescales the original GRPO advantage, creating a Shaped Advantage: $\hat{A}'(\tau_i) = \alpha(\tau_i) \cdot \hat{A}_{\text{\grpo}}(\tau_i)$. This adaptively amplifies updates for trajectories with high visual engagement and dampens those that are less visually grounded.

\paragraph{VPPO Objective.}
Integrating these two modulations yields the final VPPO objective. It channels the shaped advantage $\hat{A}'_i$ exclusively to the most dependent tokens via the mask $m_{i,t}$:
\begin{equation}
\label{eq:vppo_objective}
\mathcal{L}^{\vppo}(\theta) = \mathbb{E} \left[ \frac{1}{G} \sum_{i=1}^{G} \frac{1}{|o_i|} \sum_{t=1}^{|o_i|} m_{i,t} \cdot \min \left( r_{i,t}(\theta) \hat{A}'_i, \text{clip}(r_{i,t}(\theta), 1-\varepsilon, 1+\varepsilon) \hat{A}'_i \right) \right]
\end{equation}
where $\hat{A}'_i = \alpha(\tau_i) \cdot \hat{A}_{\text{\grpo}, i}$. The synergy between the shaping factor $\alpha(\tau_i)$ and the mask $m_{i,t}$ provides a structured, interpretable, and efficient solution to the uniform learning signal problem. 
{A detailed, step-by-step implementation of the entire training procedure is provided in Appendix~\ref{app:training_procedure}.}

\subsection{Theoretical Analysis}
\label{sec:theory}

We provide a theoretical analysis of how VPPO constructs a lower-variance policy gradient estimator. Let $\mathbf{v}_t = \nabla_\theta \log \pi_\theta(o_t | s_t, I)$ be the per-step policy gradient. The standard GRPO estimator for a trajectory $\tau$ serves as our baseline:
\begin{equation}
    \mathbf{g}_{\grpo}(\tau) = \hat{A}_{\grpo}(\tau) \sum_{t=0}^{T-1} \mathbf{v}_t
\end{equation}
The VPPO estimator refines this by incorporating a shaping factor $\alpha(\tau)$ and restricting the sum to the set of top-$k\%$ visually dependent tokens $\mathcal{K}_\tau$:
\begin{equation}
    \mathbf{g}_{\vppo}(\tau) = \alpha(\tau) \hat{A}_{\grpo}(\tau) \sum_{t \in \mathcal{K}_\tau} \mathbf{v}_t
\end{equation}

\begin{theorem}[Variance Reduction]
\label{theorem:1}
The variance of the VPPO estimator is approximately related to the GRPO estimator by the following expression:
\begin{equation}
    \text{Var}(\mathbf{g}_{\vppo}) \approx k \cdot \mathbb{E}[\alpha(\tau)^2] \cdot \text{Var}(\mathbf{g}_{\grpo})
\end{equation}
\end{theorem}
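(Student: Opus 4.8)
The plan is to reduce both estimators to a common ``scaled sum of per-step score vectors'' skeleton, control the variance of that skeleton under a single mild decorrelation assumption, and then track how the scalar factor $\alpha(\tau)$ and the restriction to the index set $\mathcal{K}_\tau$ rescale it. Throughout I read $\text{Var}(\mathbf{g})$ as the trace of the covariance, $\text{Var}(\mathbf{g}) = \mathbb{E}\big[\|\mathbf{g} - \mathbb{E}[\mathbf{g}]\|^2\big]$; every step below also holds componentwise. Two facts drive the argument: the score identity $\mathbb{E}[\mathbf{v}_t \mid s_t] = 0$, which makes the per-step vectors $\{\mathbf{v}_t\}$ along a rollout pairwise uncorrelated and zero-mean, and the genuine \emph{approximation} that they share a common second moment $\sigma_v^2 := \mathbb{E}[\|\mathbf{v}_t\|^2]$, so that $\mathbb{E}\big[\|\sum_{t \in S}\mathbf{v}_t\|^2\big] \approx |S|\,\sigma_v^2$ for any index set $S$.

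First I would treat GRPO. Conditioning on the rollout $\tau$, the advantage $\hat{A}_{\grpo}(\tau)$ is a scalar constant along the trajectory and pulls out of the inner sum; combining the skeleton estimate for $S = \{0,\dots,T-1\}$ with the zero-mean property and treating $\hat{A}_{\grpo}(\tau)^2$ as decorrelated from $\|\sum_t \mathbf{v}_t\|^2$ yields $\text{Var}(\mathbf{g}_{\grpo}) \approx \mathbb{E}[\hat{A}_{\grpo}(\tau)^2]\, T\, \sigma_v^2$, the $\|\mathbb{E}[\mathbf{g}_{\grpo}]\|^2$ correction being subdominant. Next I would repeat the computation for VPPO with the extra scalar $\alpha(\tau)$ and with $S = \mathcal{K}_\tau$, whose size is $|\mathcal{K}_\tau| = \lceil kT \rceil \approx kT$ by the definition of the top-$k\%$ mask, giving $\text{Var}(\mathbf{g}_{\vppo}) \approx \mathbb{E}[\alpha(\tau)^2\,\hat{A}_{\grpo}(\tau)^2]\,(kT)\,\sigma_v^2$. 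Finally, treating the perceptual shaping factor $\alpha(\tau)$ as approximately uncorrelated from the outcome-derived quantity $\hat{A}_{\grpo}(\tau)^2$ --- plausible because one is a normalized visual-dependency statistic and the other a normalized verifiable reward --- I factor $\mathbb{E}[\alpha(\tau)^2\,\hat{A}_{\grpo}(\tau)^2] \approx \mathbb{E}[\alpha(\tau)^2]\,\mathbb{E}[\hat{A}_{\grpo}(\tau)^2]$ and substitute the GRPO identity, collapsing the expression to $\text{Var}(\mathbf{g}_{\vppo}) \approx k\,\mathbb{E}[\alpha(\tau)^2]\,\text{Var}(\mathbf{g}_{\grpo})$.

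The main obstacle --- and the reason the result carries an $\approx$ rather than an equality --- is justifying that retaining the top-$k\%$ most visually dependent tokens behaves, to leading order, like uniformly subsampling $kT$ of the $T$ steps. Those tokens are chosen \emph{precisely} because they exhibit a large distributional shift under image perturbation, so their score vectors may have systematically larger norm and different temporal correlation than the trajectory average; a fully rigorous version would replace $\sigma_v^2$ by a set-dependent $\sigma_{\mathcal{K}}^2$ and then argue $\sigma_{\mathcal{K}}^2 \approx \sigma_v^2$, or carry the ratio $\sigma_{\mathcal{K}}^2/\sigma_v^2$ as an explicit correction factor. The remaining gaps --- the common-second-moment assumption (autoregressive decoding makes consecutive $\mathbf{v}_t$ genuinely dependent) and the $\alpha$-versus-$\hat{A}_{\grpo}$ factorization --- I would present as modeling assumptions and, if space allows, corroborate with the empirically measured gradient-variance curves already reported for VPPO against GRPO.
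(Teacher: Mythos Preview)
Your proposal is correct and follows essentially the same approach as the paper's proof: both compute the second moments $\mathbb{E}[\|\mathbf{g}\|^2]$ under the same chain of assumptions (uncorrelated per-step gradients, advantage independent of the gradient sum, $\alpha(\tau)$ independent of $\hat{A}_{\grpo}(\tau)$, constant $\mathbb{E}[\|\mathbf{v}_t\|^2]\approx C$), then take the ratio and invoke second-moment dominance of the variance. Your use of the score identity to \emph{derive} the uncorrelated-gradient property, and your explicit caveat that the top-$k\%$ selection may bias $\sigma_{\mathcal{K}}^2$ away from $\sigma_v^2$, are welcome refinements that the paper simply posits as assumptions or glosses over.
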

The full derivation, along with the underlying assumptions, is provided in Appendix~\ref{app:proofs}. This result reveals a significant variance reduction. By design, the sparsity ratio $k$ is a fraction in $(0, 1)$, while the shaping factor $\alpha(\tau)$ is scaled to a narrow band around 1, ensuring their product $k \cdot \mathbb{E}[\alpha(\tau)^2]$ is substantially less than 1. 
Therefore, our VPPO reduces variance by filtering out low-dependency gradients and regularizing update magnitudes for less visually-grounded trajectories, leading to a more stable and efficient learning signal.
\section{Experiments}
\label{sec:experiments}

\paragraph{Models, Data, and Baselines.}
To have a fair comparison with previous works, following~\citet{wang2025vl}, we apply \vppo{} to the Qwen2.5-VL-7B and Qwen2.5-VL-32B base models and train on the \texttt{ViRL39K}, a diverse collection of multimodal reasoning problems. 
We benchmark our models against a comprehensive suite of state-of-the-art, open-source reasoning LVLMs across both model scales. Our 7B comparison includes DAPO (Qwen2.5-VL-7B)~\citep{yu2025dapo}, MM-Eureka-7B~\citep{meng2025mm}, ThinkLite-7B~\citep{wang2025sota}, VL-Rethinker-7B~\citep{wang2025vl}, R1-ShareVL-7B~\citep{yao2025r1}, NoisyRollout-7B~\citep{liu2025noisyrollout}, and PAPO-D-7B~\citep{wang2025perception}, while the 32B class includes MM-Eureka-32B~\citep{meng2025mm} and NoisyRollout-32B~\citep{liu2025noisyrollout}. 

\paragraph{Training Details.}
Following~\citet{wang2025perception}, our models are trained for \texttt{2} epochs with a learning rate of \texttt{1e-6} and a rollout batch size of \texttt{384}. {We set the maximum response length to \texttt{2048} for 7B models following previous works such as R1-ShareVL, NoisyRollout, and PAPO-D, and \texttt{4096} for 32B models.}
To ensure training stability and enable a fair comparison, a small entropy penalty (coefficient \texttt{0.06}) is applied to both \vppo{} and the baseline. More details are described in Appendix~\ref{app:entropy_penalty_analysis}. 
For \vppo{}, we set the gradient filtering ratio to $k=\texttt{0.4}$ and the advantage shaping range to $\beta_{\min}= \texttt{0.9}$, with $\beta_{\max}$ adjusted dynamically per batch. More hyperparameter details are available in Appendix~\ref{app:implementation_details}.

\paragraph{Evaluation Benchmarks.}
We conduct comprehensive evaluation on eight diverse multimodal reasoning benchmarks. 
Following~\citet{wang2025perception}, we use an exact-match scoring methodology, eliminating reliance on LLM-as-a-judge systems. 
The benchmarks span mathematical, geometric, logical, and multi-discipline reasoning, including DynaMath~\citep{zou2024dynamath}, Geo3k~\citep{lu2021inter}, MathVerse~\citep{zhang2024mathverse}, MathVision~\citep{wang2024measuring}, MMK12~\citep{meng2025mm}, We-Math~\citep{qiao2024we}, LogicVista~\citep{xiao2024logicvista}, and MMMU-Pro~\citep{yue2024mmmu} (see Appendix~\ref{app:benchmark_analysis} for a full breakdown). We report average accuracy@8 at an inference temperature of 1.0, using a single, fixed evaluation pipeline for all models to ensure fair comparison.
\section{Results}

\subsection{Main Results}


\definecolor{opensourcecolor}{HTML}{fdf3d0}
\definecolor{ourscolor}{HTML}{f0ccce}
\definecolor{scalingcolor}{HTML}{cadfb8}

\renewcommand{\theadfont}{\scriptsize\bfseries}
\begin{table}[t]
    \caption{\textbf{Main Results (avg@8 acc \%).} 
    All benchmarks use exact match on verifiable instances for objective results, avoiding any LLM-as-a-judge. Notably, our results are achieved via direct RL without any supervised fine-tuning.
    $^\dag$Our reproduction uses official author-provided prompts. $^*$NoisyRollout is trained using the training set of Geo3k.}
    \centering
    \scriptsize
    \renewcommand{\arraystretch}{1.3} 
    \setlength{\tabcolsep}{3.2pt}
        \begin{tabularx}{\textwidth}{l|cccccc|c|c|C} 
            \toprule
            \multicolumn{1}{c|}{\multirow{2}{*}{\textbf{Model}}} & 
            \multicolumn{6}{c|}{\scriptsize\bfseries Mathematical \& Geometric} & 
            \multicolumn{1}{c|}{\scriptsize\bfseries Logical} & 
            \multicolumn{1}{c|}{\thead{\scriptsize Multi-\\ \scriptsize discipline}} & 
            \multicolumn{1}{c}{\multirow{2}{*}{\thead{\scriptsize Avg.}}} \\
            \cmidrule{2-7} \cmidrule{8-8} \cmidrule{9-9}
            & \thead{MathVerse} & \thead{DynaMath} & \thead{MMK12} & \thead{Geo3k} & \thead{MathVision} & \thead{We-Math} & \thead{LogicVista} & \thead{MMMU-Pro} & \\
            \midrule
            \rowcolor{opensourcecolor} \multicolumn{10}{c}{\scriptsize \em Open-Source Models (Trained via Pure RL) }\\
            \midrule
            MM-Eureka-7B$^\dag$ & 67.1 & 65.4 & 67.5 & 40.3 & 31.1 & 65.5 & 46.3 & 30.3 & 51.7 \\
            ThinkLite-7B$^\dag$ & 64.2 & 64.6 & 62.6 & 37.6 & \underline{32.0} & 66.5 & 39.4 & 28.0 & 49.4 \\
            VL-Rethinker-7B$^\dag$ & \underline{68.8} & 65.7 & 68.3 & 40.7 & 31.9 & 68.9 & 46.3 & \underline{37.0} & 53.5 \\
            NoisyRollout-7B$^\dag$ & 67.8 & 65.5 & 50.0 & \,\,\textbf{51.8}$^*$ & 22.1 & \underline{71.0} & \underline{47.3} & 34.5 & 51.3 \\
            R1-ShareVL-7B$^\dag$ & 68.0 & 65.1 & 70.9 & 41.2 & 30.1 & 69.9 & 45.6 & 35.1 & 53.2 \\
            PAPO-D-7B & 68.6 & \,\,\underline{66.8}$^\dag$ & 80.6 & 44.1 & \,\,30.6$^\dag$ & 68.3 & 46.7 & 36.3 & \underline{55.3} \\
            \midrule
            Qwen2.5-VL-7B & 39.0 & 55.7 & 42.5 & 37.1 & 18.4 & 46.4 & 42.4 & 25.1 & 38.3 \\
            \qquad\quad\, + GRPO & 66.5 & 65.8 & 72.3 & 40.2 & 30.7 & 68.1 & 45.6 & 35.2 & 53.1 \\
            \qquad\quad\, + DAPO & 68.3 & 66.6 & \underline{82.1} & 41.5 & 30.5 & 68.0 & 46.8 & 35.9 & 55.0 \\
            \rowcolor{mygray} \qquad\quad\, + VPPO & \textbf{71.6} & \textbf{68.1} & \textbf{82.8} & \underline{46.5} & \textbf{33.3} & \textbf{71.5} & \textbf{47.9} & \textbf{37.9} & \textbf{57.5} \\
            \midrule
            \midrule
            \rowcolor{scalingcolor} \multicolumn{10}{c}{\scriptsize \em Scaling to Larger Models }\\
            \midrule
            MM-Eureka-32B$^\dag$ & 71.8 & 72.0 & 73.4 & 51.0 & \underline{43.2} & 75.0 & 56.8 & 43.1 & 60.8 \\
            NoisyRollout-32B$^\dag$ & 73.0 & 72.2 & 60.2 & \,\,\textbf{56.6}$^*$ & 27.9 & 75.7 & 56.2 & 43.1 & 58.1 \\
            \midrule
            Qwen2.5-VL-32B & 68.5 & 68.7 & 68.8 & 47.0 & 39.3 & 71.0 & 52.8 & 39.6 & 57.0 \\

            \qquad\quad\, \textcolor{black}{+ GRPO} & \underline{\textcolor{black}{74.2}} & \textcolor{black}{71.6} & \textcolor{black}{80.7} & \textcolor{black}{51.4} & \textcolor{black}{42.8} & \underline{\textcolor{black}{76.7}} & \textcolor{black}{58.3} & \textcolor{black}{45.4} & \textcolor{black}{62.6} \\
\qquad\quad\, \textcolor{black}{+ DAPO} & \textcolor{black}{73.3} & \underline{\textcolor{black}{72.6}} & \textbf{\textcolor{black}{86.4}} & \textcolor{black}{51.4} & \textcolor{black}{42.8} & \textcolor{black}{76.2} & \underline{\textcolor{black}{58.9}} & \underline{\textcolor{black}{46.4}} & \underline{\textcolor{black}{63.5}} \\

            \rowcolor{mygray} \qquad\quad\, + VPPO & \textbf{75.1} & \textbf{73.1} & \underline{86.3} & \underline{53.4} & \textbf{44.6} & \textbf{77.7} & \textbf{59.2} & \textbf{47.1} & \textbf{64.6} \\
            \bottomrule
        \end{tabularx}
    \label{tab:main_result}
    \vspace{-4mm}
\end{table}

\begin{wrapfigure}{r}{0.276\textwidth}
  \vspace{-15pt}
  \centering
  
  \includegraphics[width=0.276\textwidth]{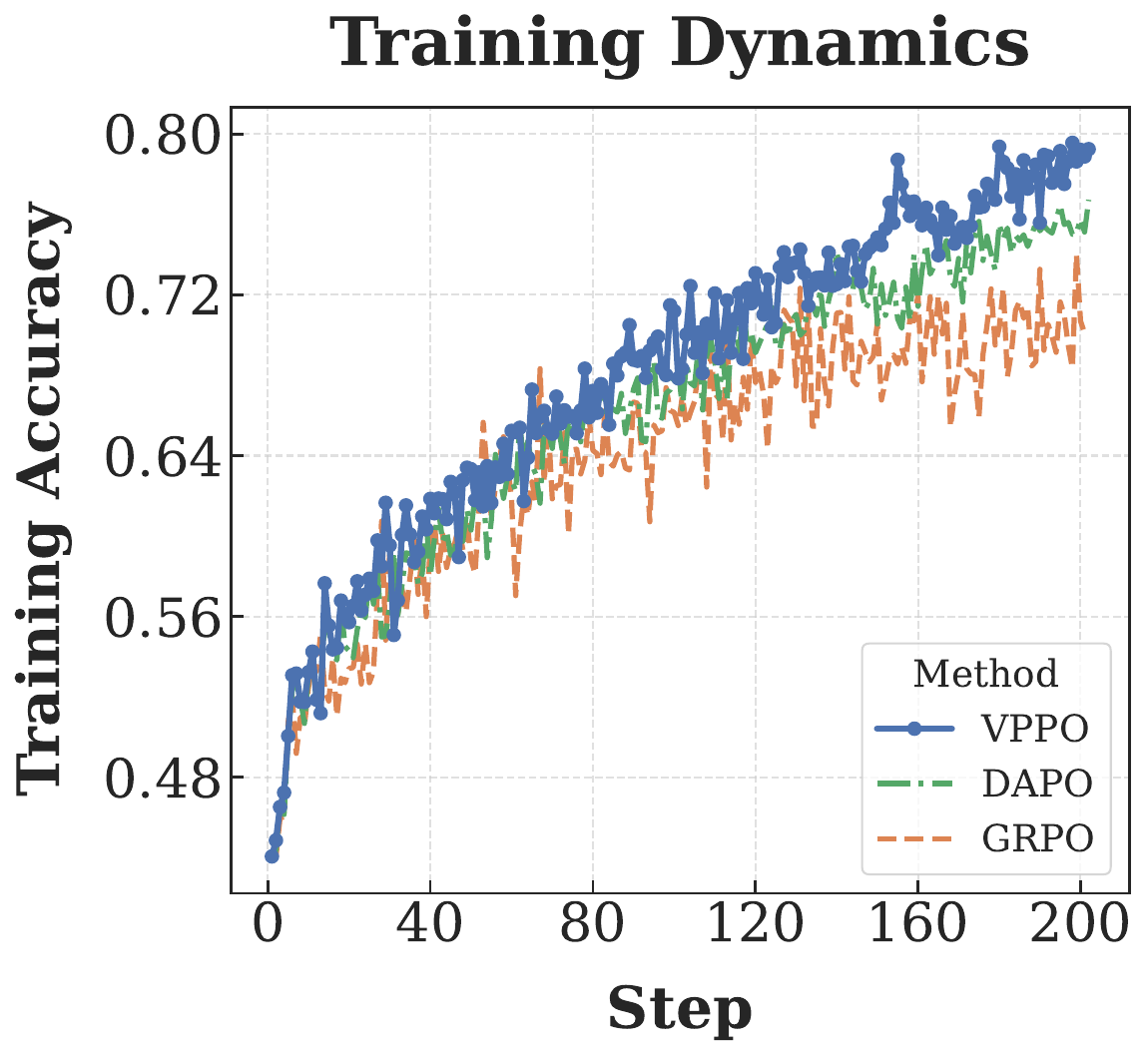}
  \vspace{-7mm}
  \caption{
      Training dynamics for \vppo{} and baselines.
  }
  \label{fig:initial_convergence}
  \vspace{-13pt}
\end{wrapfigure}
As shown in Table~\ref{tab:main_result}, \vppo{} consistently outperforms the entire field of strong, open-source competitors across both 7B and 32B parameter classes. In the 7B class, our model achieves an average accuracy of \texttt{57.5}\%, significantly outperforming the next-best model PAPO. This superior performance scales directly to the 32B class, where \vppo{} again leads the field with an average accuracy of \texttt{64.6}\%, \textcolor{black}{surpassing the next-best method, DAPO.} 
These results across different model scales demonstrate the effectiveness of our VPPO.

These state-of-the-art results are underpinned by superior training dynamics, as illustrated in the training curves against the baselines (Figure~\ref{fig:initial_convergence}), which demonstrates that \vppo{} exhibits significantly faster initial convergence, achieving higher performance more efficiently. 
This demonstrates that our targeted, hierarchical learning signal not only leads to a better final model but also acts as a potent implicit regularizer, ensuring a more efficient and robust path to high performance.

\subsection{Ablation Studies}
\label{subsec:ablation}

\begin{table*}[h!]
    \caption{
        Ablation of Trajectory-level Advantage Shaping (TAS) and Token-level Gradient Filtering (TGF). Their combination yields the best results, confirming the efficacy of our hierarchical design.
    }
    \vspace{-1mm}
    \centering
    \scriptsize
    \renewcommand{\arraystretch}{1.1} 
    \renewcommand{\theadfont}{\scriptsize\bfseries}
    \setlength{\tabcolsep}{3pt}
    \begin{tabular}{lccccccccc} 
        \toprule
        \textbf{Model Configuration}  & \thead{MathVerse} & \thead{DynaMath} & \thead{MMK12} & \thead{Geo3k} & \thead{MathVision} & \thead{We-Math} & \thead{LogicVista} & \thead{MMMU-Pro} & \thead{Avg.} \\
        \midrule
        Baseline (\dapo{}) & 68.3 & 66.6 & 82.1 & 41.5 & 30.5 & 68.0 & 46.8 & 35.9 & 55.0 \\
        \quad + TAS only    & 70.4 & 67.5 & 83.3 & 43.5 & 31.3 & 69.3 & 47.4 & 37.3 & 56.3 \\
        \quad + TGF only    & 71.2 & 68.6 & 80.9 & 45.3 & 34.7 & 70.3 & 48.2 & 37.3 & 57.1 \\
        \midrule 
        \rowcolor{mygray} \textbf{\vppo{} (TAS + TGF)} & 71.6 & 68.1 & 82.8 & 46.5 & 33.3 & 71.5 & 47.9 & 37.9 & 57.5 \\
        \bottomrule
    \end{tabular}
    \label{tab:ablation_components}
    \vspace{-4mm}
\end{table*}

\paragraph{Ablation Study on \vppo{} Components.}
We first analyze the effectiveness of our two primary mechanisms: {T}rajectory-level {A}dvantage {S}haping (TAS) and {T}oken-level {G}radient {F}iltering (TGF). 
As shown in Table~\ref{tab:ablation_components}, both components individually outperform the baseline. TGF provides the largest single contribution, highlighting the importance of directing the learning signal to pivotal tokens. However, the combination of both mechanisms in the full \vppo{} model achieves optimal performance, confirming the synergistic value of our hierarchical design.

\begin{wrapfigure}{r}{0.4\textwidth} 
  \vspace{-15pt} 
  \centering
  \includegraphics[width=0.4\textwidth]{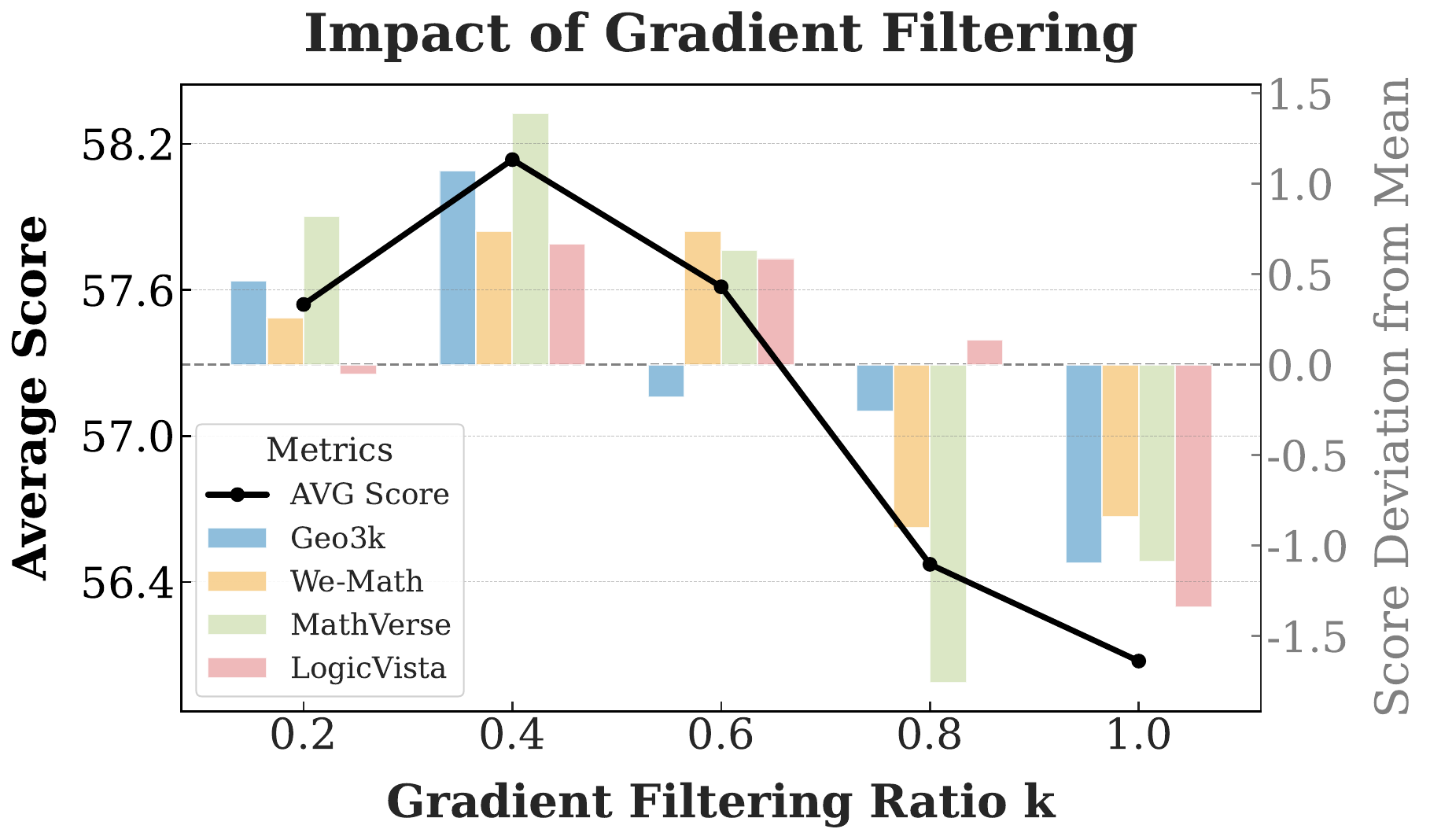}
    \caption{
        Ablation on the gradient filtering ratio ($k$). The line shows the average score, while bars show per-benchmark deviation from their mean.
    }
    \label{fig:gradient_filtering}

  \vspace{-20pt} 
\end{wrapfigure}

\paragraph{Sensitivity to Gradient Filtering Ratio $k$.}
We investigate how performance varies with the token filtering ratio $k$ in TGF. As shown in Figure~\ref{fig:gradient_filtering}, performance peaks around $k=\texttt{0.4}$. This highlights a crucial trade-off: a $k$ that is too low provides insufficient learning signal, while a $k$ that is too high reintroduces noise from non-pivotal tokens, validating our sparse update strategy.

\paragraph{Sensitivity to Advantage Shaping Range.}
We analyze the sensitivity of our model to the TAS scaling range $[\beta_{\min}, \beta_{\max}]$. 
Table~\ref{tab:ablation_beta} shows that a conservative lower bound with a dynamic upper bound ($\beta_{\min}=0.9, \beta_{\max}=\text{Dyn.}$) performs best. This setting adaptively reweights advantages based on batch-wise dependency distributions, preventing aggressive updates while rewarding visually-grounded reasoning.

\begin{table}[h!]
    \vspace{-1mm}
    \caption{
        Ablation study on the scaling range $[\beta_{\min}, \beta_{\max}]$ for Trajectory-level Advantage Shaping (TAS), including both fixed and dynamic (Dyn.) configurations.
    }
    \vspace{-1mm}
    \centering
    \scriptsize
    \renewcommand{\arraystretch}{1.1} 
    \renewcommand{\theadfont}{\scriptsize\bfseries}
    \setlength{\tabcolsep}{2.6pt}
    \begin{tabular}{lccccccccc} 
        \toprule
        \textbf{TAS Configuration} & \thead{MathVerse} & \thead{DynaMath} & \thead{MMK12} & \thead{Geo3k} & \thead{MathVision} & \thead{We-Math} & \thead{LogicVista} & \thead{MMMU-Pro} & \thead{Avg.} \\
        \midrule
        Baseline (\dapo{}) & 68.3 & 66.6 & 82.1 & 41.5 & 30.5 & 68.0 & 46.8 & 35.9 & 55.0 \\
        \midrule
        $\beta_{\min}=0.8, \beta_{\max}=1.2$  & 68.7 & 67.5 & 82.9 & 43.4 & 31.9 & 69.4 & 46.5 & 36.7 & 55.9 \\
        $\beta_{\min}=0.8, \beta_{\max}=\text{Dyn.}$ & 69.8 & 67.6 & 82.6 & 43.1 & 31.5 & 70.3 & 47.1 & 37.3 & 56.2 \\
        $\beta_{\min}=0.9, \beta_{\max}=1.1$  & 69.1 & 67.6 & 82.6 & 43.2 & 31.5 & 69.2 & 46.6 & 37.2 & 55.9 \\
        \midrule
        \rowcolor{mygray} \textbf{$\beta_{\min}=0.9, \beta_{\max}=\text{Dyn.}$} & 70.4 & 67.5 & 83.3 & 43.5 & 31.3 & 69.3 & 47.4 & 37.3 & 56.3 \\
        \bottomrule
    \end{tabular}
    \label{tab:ablation_beta}
    \vspace{-3mm}
\end{table}

\paragraph{Validation of the dependency Calculation Method.}
To further validate the robustness of our core visual dependency metric, we conducted two additional, detailed ablation studies presented in the appendix. The first study (Appendix~\ref{app:masking_ablation}) evaluates our choice of image perturbation strategy against several alternatives. The second (Appendix~\ref{app:dependency_method_ablation}) compares our KL-divergence metric against other computationally-feasible calculation heuristics.

\begin{table}[h!]
    \caption{
        Performance comparison of Token-level Gradient Filtering (TGF) under three guidance signals: visual dependency (our method), predictive entropy, and random selection. \textcolor{black}{$k$ denotes the ratio of tokens retained for the policy update (e.g., top-$k$\% or random $k$\%).}
    }
    \centering
    \scriptsize
    \renewcommand{\arraystretch}{1.1} 
    \renewcommand{\theadfont}{\scriptsize\bfseries}
    \setlength{\tabcolsep}{3pt}
    \begin{tabular}{lccccccccc} 
        \toprule
        \textbf{Guidance Mechanism} & \thead{MathVerse} & \thead{DynaMath} & \thead{MMK12} & \thead{Geo3k} & \thead{MathVision} & \thead{We-Math} & \thead{LogicVista} & \thead{MMMU-Pro} & \thead{Avg.} \\
        \midrule
        Baseline (\dapo{}) & 68.3 & 66.6 & 82.1 & 41.5 & 30.5 & 68.0 & 46.8 & 35.9 & 55.0 \\
        \midrule
        \quad + Random ($k=0.4$) & 69.3 & 66.2 & 76.8 & 42.0 & 31.0 & 69.3 & 47.5 & 36.2 & 54.8 \\
        \quad + Entropy ($k=0.2$) & 70.1 & 67.2 & 77.9 & 45.0 & 32.6 & 70.6 & 48.0 & 36.4 & 56.0 \\
        \quad + Entropy ($k=0.4$) & 69.3 & 67.6 & 80.0 & 42.8 & 31.7 & 69.4 & 47.4 & 37.0 & 55.7 \\
        \quad + Entropy ($k=0.6$) & 69.9 & 67.4 & 81.0 & 43.4 & 31.4 & 69.1 & 47.1 & 36.9 & 55.8 \\
        \quad + Entropy ($k=0.8$) & 69.6 & 66.9 & 81.1 & 41.6 & 31.2 & 69.0 & 46.6 & 36.2 & 55.3 \\
        \midrule
        \rowcolor{mygray} \textbf{Our TGF ($k=0.4$)} & 71.2 & 68.6 & 80.9 & 45.3 & 34.7 & 70.3 & 48.2 & 37.3 & 57.1 \\
        \bottomrule
    \end{tabular}
    \label{tab:entropy_compare}
    \vspace{-3mm}
\end{table}

\paragraph{Superiority over Entropy-based Token Selection.}
As depicted in Table~\ref{tab:entropy_compare}, we compare different methods for selecting pivotal tokens in multimodal reasoning, \textcolor{black}{where the filtering ratio $k$ determines the percentage of tokens retained for gradient computation (via random selection or top-$k$\% ranking).} For text-only LLMs, high-entropy ``forking tokens'' is an effective optimization strategy \citep{wang2025beyond}. 
\textcolor{black}{
However, this strategy fails to yield significant gains in multimodal tasks. While high entropy effectively captures logical reasoning steps (e.g., operators, connectors) 
where the model is uncertain, it overlooks visually-grounded facts. Tokens representing direct observations (e.g., specific numbers like \texttt{25}, entities like \texttt{$\triangle$AOB}) 
often exhibit low entropy because the model is confident once perceived, yet they possess high visual dependency. Unlike entropy-based methods that miss these foundational premises, VPPO targets both the uncertain reasoning junctions and these confident, indispensable visual facts, thereby building reasoning on a more solid perceptual foundation.
}

\begin{table}[h!]
\caption{\textcolor{black}{Ablation study on the generalizability of VPPO by applying it to GRPO. The consistent improvement confirms its benefits are independent of the base policy gradient algorithm.}}

\centering
\scriptsize
\renewcommand{\arraystretch}{1.1} 
\renewcommand{\theadfont}{\scriptsize\bfseries}
\setlength{\tabcolsep}{3pt}
\label{tab:grpo_ablation}
\begin{tabular}{lccccccccc}
\toprule
\textbf{Model} & \thead{MathVerse} & \thead{DynaMath} & \thead{MMK12} & \thead{Geo3k} & \thead{MathVision} & \thead{We-Math} & \thead{LogicVista} & \thead{MMMU-Pro} & \thead{Avg.} \\
\midrule
Qwen2.5-VL-7B & 39.0 & 55.7 & 42.5 & 37.1 & 18.4 & 46.4 & 42.4 & 25.1 & 38.3 \\
\quad + GRPO & 66.5 & 65.8 & 72.3 & 40.2 & 30.7 & 68.1 & 45.6 & 35.2 & 53.1 \\
\midrule
\rowcolor{mygray} \quad \textbf{\textcolor{black}{+ VPPO w/ GRPO}} & \textcolor{black}{69.7} & \textcolor{black}{66.4} & \textcolor{black}{76.4} & \textcolor{black}{41.0} & \textcolor{black}{31.7} & \textcolor{black}{69.5} & \textcolor{black}{47.6} & \textcolor{black}{35.8} & \textcolor{black}{54.8} \\
\bottomrule
\end{tabular}
\vspace{-3mm}
\end{table}

\paragraph{\textcolor{black}{Generalization to the GRPO algorithm.}}
\textcolor{black}{
To verify VPPO's generality, we implemented it on top of GRPO. As shown in Table~\ref{tab:grpo_ablation}, VPPO improves GRPO's accuracy by \texttt{1.7}\%
(from \texttt{53.1}\% to \texttt{54.8}\%). This result is consistent with the \texttt{2.5}\% improvement observed when applying VPPO to DAPO (Table~\ref{tab:main_result}), confirming that the performance gains are attributable to our visually-perceptive optimization strategy rather than a specific interaction with the base policy gradient algorithm.}

\begin{table}[h!]
\caption{\textcolor{black}{Performance comparison of our binary mask against a continuous soft mask for TGF. The binary mask's superior performance validates a more decisive filtering of non-pivotal gradients.}}
\centering
\scriptsize
\renewcommand{\arraystretch}{1.1} 
\renewcommand{\theadfont}{\scriptsize\bfseries}
\setlength{\tabcolsep}{3pt}
\label{tab:mask_ablation}
\begin{tabular}{lccccccccc}
\toprule
\textbf{Algorithm} & \thead{MathVerse} & \thead{DynaMath} & \thead{MMK12} & \thead{Geo3k} & \thead{MathVision} & \thead{We-Math} & \thead{LogicVista} & \thead{MMMU-Pro} & \thead{Avg.} \\
\midrule
DAPO & 68.3 & 66.6 & 82.1 & 41.5 & 30.5 & 68.0 & 46.8 & 35.9 & 55.0 \\
\textcolor{black}{VPPO w/ Soft Mask} & \textcolor{black}{70.0} & \textcolor{black}{67.2} & \textcolor{black}{82.6} & \textcolor{black}{43.8} & \textcolor{black}{32.6} & \textcolor{black}{70.6} & \textcolor{black}{46.6} & \textcolor{black}{36.3} & \textcolor{black}{56.2} \\
\midrule
\rowcolor{mygray} \textbf{VPPO (Binary Mask)} & 71.6 & 68.1 & 82.8 & 46.5 & 33.3 & 71.5 & 47.9 & 37.9 & 57.5 \\
\bottomrule
\end{tabular}
\vspace{-3mm}
\end{table}

\paragraph{\textcolor{black}{Binary versus Soft Gradient Filtering.}}
\textcolor{black}{We evaluated our binary mask for Token-level Gradient Filtering (TGF) against a continuous soft mask that assigns a calibrated weight to each token's gradient; the specific implementation is detailed in Appendix~\ref{app:soft_mask}. As shown in Table~\ref{tab:mask_ablation}, our binary mask is more effective, outperforming the soft mask by \texttt{1.3}\% (which itself surpassed the baseline by \texttt{1.2}\%). We hypothesize the binary mask acts as a more decisive noise filter; its hard-gating of gradients from non-pivotal tokens creates a stronger and more focused learning signal.}

\begin{table}[h!]
\caption{\textcolor{black}{Performance comparison of advantage shaping versus reward shaping. The superior performance of advantage shaping validates its use for a more stable policy gradient update.}}
\centering
\scriptsize
\renewcommand{\arraystretch}{1.1} 
\renewcommand{\theadfont}{\scriptsize\bfseries}
\setlength{\tabcolsep}{3pt}
\label{tab:advantage_shaping_ablation}
\begin{tabular}{lccccccccc}
\toprule
\textbf{Algorithm} & \thead{MathVerse} & \thead{DynaMath} & \thead{MMK12} & \thead{Geo3k} & \thead{MathVision} & \thead{We-Math} & \thead{LogicVista} & \thead{MMMU-Pro} & \thead{Avg.} \\
\midrule
DAPO & 68.3 & 66.6 & 82.1 & 41.5 & 30.5 & 68.0 & 46.8 & 35.9 & 55.0 \\
\textcolor{black}{VPPO w/ Reward Shaping} & \textcolor{black}{70.7} & \textcolor{black}{68.4} & \textcolor{black}{82.6} & \textcolor{black}{44.7} & \textcolor{black}{33.5} & \textcolor{black}{70.1} & \textcolor{black}{47.1} & \textcolor{black}{37.6} & \textcolor{black}{56.8} \\
\midrule
\rowcolor{mygray} \textbf{VPPO (Adv. Shaping)} & 71.6 & 68.1 & 82.8 & 46.5 & 33.3 & 71.5 & 47.9 & 37.9 & 57.5 \\
\bottomrule
\end{tabular}
\vspace{-3mm}
\end{table}

\paragraph{\textcolor{black}{Advantage Shaping versus Reward Shaping.}} 
\textcolor{black}{We compared our strategy of modulating the advantage term against the alternative of scaling the raw reward. As shown in Table~\ref{tab:advantage_shaping_ablation}, shaping the advantage is more effective, outperforming reward shaping by \texttt{0.7}\% on average. We attribute this to greater stability; directly modulating the final advantage applies a clean scaling to the gradient update, whereas modifying the reward before the advantage calculation can introduce variance and create a noisier learning signal.}

\subsection{\textcolor{black}{Generalization to Out-of-Domain VQA}}

\begin{wraptable}{r}{0.39\textwidth}
    \vspace{-4mm}

    \centering
    \scriptsize
    \caption{\textcolor{black}{Performance on out-of-domain VQA benchmarks.}}
    \label{tab:vqa_generalization}
    \setlength{\tabcolsep}{3pt}
    \renewcommand{\arraystretch}{1.1} 
    \renewcommand{\theadfont}{\scriptsize\bfseries}
    \begin{tabular}{lccc}
        \toprule
        \textbf{Model} & \thead{A-OKVQA} & \thead{SimpleVQA} & \thead{Avg.} \\
        \midrule
        \textcolor{black}{Qwen2.5-VL-7B} & \textcolor{black}{84.2} & \textcolor{black}{38.6} & \textcolor{black}{61.4} \\
\quad \textcolor{black}{+ GRPO} & \textcolor{black}{87.4} & \textcolor{black}{43.1} & \textcolor{black}{65.3} \\
\quad \textcolor{black}{+ DAPO} & \textcolor{black}{87.9} & \textcolor{black}{42.9} & \textcolor{black}{65.4} \\
\midrule
\rowcolor{mygray} \quad \textbf{\textcolor{black}{+ VPPO}} & \textbf{\textcolor{black}{87.9}} & \textbf{\textcolor{black}{43.8}} & \textbf{\textcolor{black}{65.9}} \\
        \bottomrule
    \end{tabular}

    \vspace{-2mm}
\end{wraptable}

\textcolor{black}{To ensure our method does not impair general visual-language capabilities, we evaluated its performance on two unseen, out-of-domain VQA benchmarks: A-OKVQA-val~\citep{schwenk2022okvqa} and SimpleVQA-EN~\citep{cheng2025simplevqa}. The results, presented in Table~\ref{tab:vqa_generalization}, confirm that all evaluated RL fine-tuning methods significantly improve upon the Qwen2.5-VL-7B base model (approx. +\texttt{4}\% average accuracy), indicating a positive transfer of reasoning skills to general VQA. Crucially, VPPO achieves the highest overall score. We attribute this superior generalization to its core mechanism of focusing on \textit{perceptually pivotal tokens}, which enhances the model's fundamental visual grounding, a core skill that robustly benefits standard VQA tasks.}

\subsection{Qualitative Analysis}

\begin{figure}[h!]
  \centering

  \begin{minipage}[c]{0.25\textwidth}
    \centering
    \begin{subfigure}{\linewidth}
      \includegraphics[height=3.2cm, keepaspectratio]{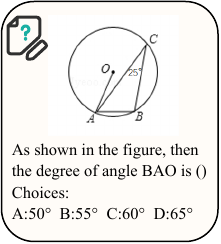}
    \end{subfigure}
  \end{minipage}%
  \hfill
  \begin{minipage}[c]{0.75\textwidth}
    \raggedleft 
    \begin{subfigure}{\linewidth} 
      \includegraphics[height=4.1cm, keepaspectratio]{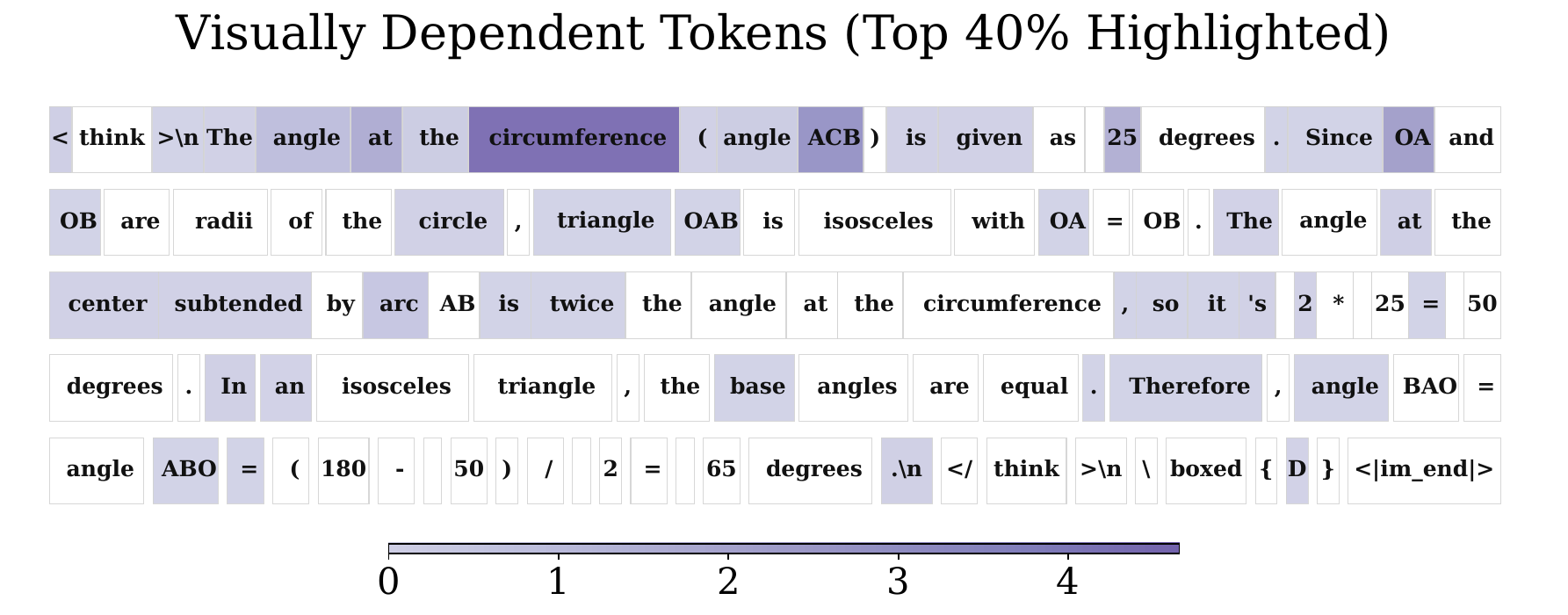}
    \end{subfigure}
  \end{minipage}
    \caption{
        The top 40\% most visually-dependent tokens are highlighted in purple, forming the core reasoning chain targeted by our gradient filtering mechanism.
    }
  \vspace{-2mm}
  \label{fig:qualitative_analysis}
\end{figure}

To further understand the token perception, we provide a qualitative analysis in Figure~\ref{fig:qualitative_analysis}.    
As shown in this figure, high dependency is assigned to foundational concepts like \texttt{circumference} and the angle value \texttt{25}. The dependency then correctly propagates to intermediate conceptual entities (\texttt{triangle OAB}, \texttt{arc}) and, crucially, to the logical syntax that structures the proof (\texttt{Since}, \texttt{Therefore}). This demonstrates a sophisticated understanding that captures not only \textit{what} concepts are important but \textit{how} they are linked to form a coherent proof.

\section{Conclusion}
In this paper, we identify the uniform learning signal as a core bottleneck in multimodal reasoning and introduce {Visually-Perceptive Policy Optimization (\vppo{})} as a principled solution. By implementing a novel, two-tiered strategy, \vppo{} first prioritizes visually-grounded trajectories through reward shaping and then focuses policy updates exclusively on a sparse set of pivotal perception tokens. This hierarchical signal modulation not only establishes a new state-of-the-art across a diverse suite of challenging benchmarks but also fosters greater training stability and efficiency. Our work demonstrates that for complex multimodal tasks, the \textit{structure} of the learning signal is as important as the reward itself. We believe that this principle of targeted, modality-aware signal modulation offers a promising and robust path forward for advancing the reasoning capabilities of Large Vision-Language Models.

\section*{Ethics Statement}

Our research is conducted entirely within the domain of multimodal reasoning. We exclusively use publicly available academic benchmarks, which do not contain any personal, sensitive, or private user data. No new data was collected for this study, and no human subjects were involved. The goal of our work is to enhance the reasoning capabilities of AI models on mathematical and logical problems. Given this focus on abstract problem-solving, we are not aware of any direct, foreseeable negative societal impacts or ethical concerns arising from our methodology or findings.

\section*{Reproducibility Statement}

To ensure the reproducibility of our work, we provide an anonymous code repository in the supplementary materials containing the full implementation of our \vppo{} algorithm and the complete evaluation pipeline. All datasets used for training and evaluation are publicly available, and a comprehensive breakdown of our experimental setup, including all key hyperparameters, is detailed in Appendix~\ref{app:implementation_details} and summarized in Table~\ref{tab:hyperparameters}. For our theoretical claims, the main results are presented in Section~\ref{sec:theory}, while the complete, step-by-step proofs and a formal list of our assumptions are provided in Appendix~\ref{app:proofs}. We believe these resources are sufficient for the research community to build upon and verify our findings.

\bibliography{iclr2026_conference}
\bibliographystyle{iclr2026_conference}

\clearpage
\appendix

\section*{Appendix}
\addcontentsline{toc}{part}{Appendix}  

\startcontents[appendix]
\printcontents[appendix]{}{1}{\subsection*{Appendix Contents}}

\section{LLM Usage Statement}
In the preparation of this paper, we used a large language model (LLM) as an assistive tool. Its role was strictly limited to proofreading for grammatical and spelling errors, and rephrasing sentences to enhance readability and clarity. The LLM was not used for generating core ideas, data analysis, or writing the main content of the paper. All intellectual contributions and the final text are the sole responsibility of the authors.

\section{Implementation Details}
\label{app:implementation_details}

\paragraph{Overall Setup.}
Our implementation is built upon the EasyR1 framework~\citep{zheng2025easyr1, sheng2024hybridflow}. All experiments were conducted using PyTorch 2.6.0 with CUDA 12.4. The base models for our experiments are the open-source \texttt{Qwen2.5-VL-7B} and \texttt{Qwen2.5-VL-32B}.

\paragraph{Training Details.}
We train all models for two epochs on the \texttt{ViRL39K} dataset~\citep{wang2025vl}. The vision tower is unfrozen during training. For the online RL process, we generate 8 responses per question. Our reward signal is a simple binary accuracy score (1 for correct, 0 for incorrect). Our training objective follows the DAPO recipe, incorporating dynamic sampling, clip-higher, and a token-level policy gradient loss, without a KL divergence penalty. All key hyperparameters for the optimizer, RL process, and evaluation are detailed in Table~\ref{tab:hyperparameters}.

\begin{table}[h!]
    \centering
    \caption{Key hyperparameters for training and evaluation.}
    \label{tab:hyperparameters}
    \small
    \begin{tabular}{ll}
        \toprule
        \textbf{Hyperparameter} & \textbf{Value} \\
        \midrule
        \multicolumn{2}{l}{\textit{General Training}} \\
        Optimizer & AdamW \\
        Learning Rate & 1e-6 \\
        LR Schedule & Constant (no warmup or decay) \\
        Epochs & 2 \\
        Freeze Vision Tower & False \\
        \midrule
        \multicolumn{2}{l}{\textit{RL Process}} \\
        Global Batch Size & 128 \\
        Rollout Batch Size & 384 \\
        Rollouts per Prompt & 8 \\
        Rollout Top-p & 0.99 \\
        Max Response Length & 2048 (7B), 4096 (32B) \\
        Reward Signal & Binary Accuracy (1/0) \\
        \midrule
        \multicolumn{2}{l}{\textit{DAPO Recipe}} \\
        Sampling Method & Dynamic Sampling \\
        Clip Ratio Low & 0.2 \\
        Clip Ratio High & 0.28 \\
        Loss Averaging Mode & Token-level \\
        KL Penalty & None \\
        \midrule
        \rowcolor{mygray!25} \multicolumn{2}{l}{\textit{VPPO Specific}} \\
        \rowcolor{mygray!25} TAS $\beta_{\min}$ & 0.9 \\
        \rowcolor{mygray!25} TAS $\beta_{\max}$ & Dynamical (batch-normalized) \\
        \rowcolor{mygray!25} TGF Ratio ($k$) & 0.4 \\
        \midrule
        \multicolumn{2}{l}{\textit{Evaluation Generation}} \\
        Temperature & 1.0 \\
        Top-p & 1.0 \\
        Max New Tokens & 2048 (7B), 4096 (32B) \\
        \bottomrule
    \end{tabular}
\end{table}

\paragraph{VPPO Configuration.}
Our proposed \vppo{} method introduces two key mechanisms, Trajectory-level Advantage Shaping (TAS) and Token-level Gradient Filtering (TGF), whose specific hyperparameters are detailed in Table~\ref{tab:hyperparameters}. The underlying visual dependency metric that guides these mechanisms was also carefully selected. As detailed in our ablation studies, the final \vppo{} configuration uses the following validated components:
\begin{itemize}

    \item \textbf{Dependency Calculation:} Visual dependency is calculated using \textit{KL Divergence}, which we found to be empirically superior to other heuristics (see Appendix~\ref{app:dependency_method_ablation}). This is implemented with the efficient ``low\_var\_kl'' estimation function provided by the EasyR1 framework.
    \item \textbf{Masking Strategy:} We use \textit{Random Patch Blackening} as the image perturbation method, which was validated as the most effective strategy in Appendix~\ref{app:masking_ablation}. The image is divided into non-overlapping patches of size 14x14, and each patch is independently set to black with a probability of 0.5.
    
\end{itemize}

\paragraph{Computational Resources.}
All models were trained on a cluster of 8 x NVIDIA H800 80GB GPUs.

\section{Training Procedure}
\label{app:training_procedure}

For clarity and reproducibility, we provide a detailed, step-by-step description of our Visually-Perceptive Policy Optimization (\vppo{}) training procedure in Algorithm~\ref{alg:vppo}. This pseudocode elaborates on the high-level methodology presented in Section~\ref{sec:vppo} of the main text. It details the four core phases of each training step: (1) data generation via rollouts, (2) the calculation of token-level visual dependency, (3) our hierarchical signal modulation, and finally, (4) the policy update using the modulated learning signal.

\begin{algorithm}[t]
\caption{The Visually-Perceptive Policy Optimization (\vppo{}) Algorithm}
\label{alg:vppo}
\begin{algorithmic}[1]
\State \textbf{Input:} Current policy $\pi_\theta$, old policy $\pi_{\theta_{\text{old}}}$, batch of prompts $D = \{(I_j, q_j)\}_{j=1}^{B}$
\State \textbf{Hyperparameters:} Group size $G$, dependency filtering ratio $k$, shaping range $[\beta_{\min}, \beta_{\max}]$

\Procedure{VPPO\_Training\_Step}{$\pi_\theta, \pi_{\theta_{\text{old}}}, D$}
    \State Initialize lists for trajectories $\mathcal{T} \leftarrow []$, original distributions $\mathcal{P} \leftarrow []$
    
    \Comment{\textbf{Phase 1: Data Generation (Rollouts)}}
    \For{each prompt $(I, q)$ in $D$}
        \For{$i = 1$ to $G$}
            \State Generate trajectory $\tau_i = (o_1, ..., o_T)$ using $\pi_{\theta_{\text{old}}}( \cdot | I, q)$
            \State Store original distributions $P_i = \{\pi_{\theta_{\text{old}}}(\cdot | s_t, I)\}_{t=1}^T$
            \State Append $\tau_i$ to $\mathcal{T}$ and $P_i$ to $\mathcal{P}$
        \EndFor
    \EndFor

    \Comment{\textbf{Phase 2: dependency Calculation}}
    \State Initialize list for dependency scores $\mathcal{S} \leftarrow []$
    \For{each trajectory $\tau_i$ and its distributions $P_i$ in $(\mathcal{T}, \mathcal{P})$}
        \State Let $(I, q)$ be the prompt for $\tau_i$
        \State Create masked image $I' \leftarrow \text{MaskingStrategy}(I)$
        \State Compute masked distributions $P'_i = \{\pi_{\theta_{\text{old}}}(\cdot | s_t, I')\}_{t=1}^T$
        \State Initialize token dependency scores $S_i \leftarrow []$
        \For{$t=1$ to $T$}
            \State $S_{i,t} \gets D_{\text{KL}}(P_{i,t} \parallel P'_{i,t})$
            \State Append $S_{i,t}$ to $S_i$
        \EndFor
        \State Append $S_i$ to $\mathcal{S}$
    \EndFor
    
    \Comment{\textbf{Phase 3: Hierarchical Signal Modulation}}
    \State Compute rewards $\{R_i\}_{i=1}^{|\mathcal{T}|}$ and standard advantages $\{\hat{A}_i\}_{i=1}^{|\mathcal{T}|}$
    \State Initialize lists for shaped advantages $\hat{\mathcal{A}}' \leftarrow []$ and masks $\mathcal{M} \leftarrow []$
    \For{each trajectory $\tau_i$ and its dependency scores $S_i$ in $(\mathcal{T}, \mathcal{S})$}
        \State \Comment{Macro-level Advantage Shaping}
        \State $\bar{S}_i \gets \frac{1}{T} \sum_{t=1}^T S_{i,t}$
        \State $\alpha_i \gets \text{Normalize}(\bar{S}_i, \text{within batch}, [\beta_{\min}, \beta_{\max}])$
        \State Append $\alpha_i \cdot \hat{A}_i$ to $\hat{\mathcal{A}}'$
        
        \State \Comment{Micro-level Gradient Filtering}
        \State $\mathcal{K}_i \gets \text{Indices of top } k \cdot T \text{ values in } S_i$
        \State Append $(\mathbb{I}(t \in \mathcal{K}_i))_{t=1}^T$ to $\mathcal{M}$
    \EndFor
    
    \Comment{\textbf{Phase 4: Policy Update}}
    \State Compute loss $\mathcal{L}^{\vppo}(\theta)$ using $\mathcal{T}$, $\hat{\mathcal{A}}'$, and $\mathcal{M}$ per Eq.~(\ref{eq:vppo_objective})
    \State Update policy parameters: $\theta \gets \text{OptimizerStep}(\nabla_\theta \mathcal{L}^{\vppo}(\theta))$
\EndProcedure
\end{algorithmic}
\end{algorithm}

\section{Proofs for Theoretical Analysis}
\label{app:proofs}

This section provides the detailed derivations for the theorems presented in Section~\ref{sec:theory}.

\subsection{Formal Setup and Assumptions}

Let $\mathbf{v}_t = \nabla_\theta \log \pi_\theta(o_t | s_t, I)$ denote the score function, or the per-step policy gradient, at timestep $t$. The proofs rely on the following standard assumptions.

\textbf{Assumption 1 (Uncorrelated Gradients).} The per-step gradients within a trajectory are approximately uncorrelated. Formally, for $t \neq j$, $\mathbb{E}[\mathbf{v}_t^T \mathbf{v}_j] \approx 0$. This is a common assumption in policy gradient analysis, as gradients at different timesteps are often driven by different and nearly independent states.

\textbf{Assumption 2 (Advantage Independence).} The trajectory-level advantage, $\hat{A}_{\grpo}(\tau)$, is treated as a random variable that is independent of the per-step gradients, $\mathbf{v}_t$. This is justified as the advantage is a scalar value computed over the entire trajectory's outcome, while the gradients are high-dimensional vectors dependent on specific states.

\textbf{Assumption 3 (dependency-Advantage Independence).} For the purpose of this analysis, we assume the trajectory shaping factor $\alpha(\tau)$ and the advantage $\hat{A}_{\grpo}(\tau)$ are uncorrelated. This simplification allows us to isolate the distinct variance reduction effects of trajectory-level advantage shaping and token-level gradient filtering.

\textbf{Assumption 4 (Second-Moment Dominance).} In high-dimensional optimization, the variance of the gradient estimator, $\text{Var}(\mathbf{g}) = \mathbb{E}[\|\mathbf{g}\|^2] - \|\mathbb{E}[\mathbf{g}]\|^2$, is dominated by the second moment, $\mathbb{E}[\|\mathbf{g}\|^2]$. This is because for a well-behaved optimization, the expected gradient $\|\mathbb{E}[\mathbf{g}]\|^2$ is typically much smaller than the expectation of the squared norm. Therefore, we analyze the variance by comparing the second moments: $\text{Var}(\mathbf{g}) \propto \mathbb{E}[\|\mathbf{g}\|^2]$.

\subsection{Proof of Theorem~\ref{theorem:1} (Variance Reduction)}
\label{app:proof_variance}

\begin{theorem}
Under Assumptions 1-4, the variance of the VPPO gradient estimator is reduced by a factor of approximately $k \cdot \mathbb{E}[\alpha(\tau)^2]$ compared to the GRPO estimator.
\end{theorem}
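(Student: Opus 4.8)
The plan is to work entirely at the level of second moments, as licensed by Assumption 4, and reduce both $\text{Var}(\mathbf{g}_{\grpo})$ and $\text{Var}(\mathbf{g}_{\vppo})$ to sums of per-step quantities $\mathbb{E}[\|\mathbf{v}_t\|^2]$ weighted by the squared scalar multipliers. First I would expand $\|\mathbf{g}_{\grpo}(\tau)\|^2 = \hat{A}_{\grpo}(\tau)^2 \sum_{t,j} \mathbf{v}_t^{\!\top}\mathbf{v}_j$; taking expectations, Assumption 2 lets the advantage factor out as $\mathbb{E}[\hat{A}_{\grpo}^2]$, and Assumption 1 annihilates every off-diagonal term $\mathbb{E}[\mathbf{v}_t^{\!\top}\mathbf{v}_j]$ with $t\neq j$, leaving $\mathbb{E}[\|\mathbf{g}_{\grpo}\|^2] \approx \mathbb{E}[\hat{A}_{\grpo}^2]\sum_{t=0}^{T-1}\mathbb{E}[\|\mathbf{v}_t\|^2]$. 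Writing $\sigma_v^2$ for the average per-step gradient second moment, this is $\approx T\,\sigma_v^2\,\mathbb{E}[\hat{A}_{\grpo}^2]$, and by Assumption 4 this is proportional to $\text{Var}(\mathbf{g}_{\grpo})$.

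Next I would repeat the identical computation for $\mathbf{g}_{\vppo}(\tau) = \alpha(\tau)\hat{A}_{\grpo}(\tau)\sum_{t\in\mathcal{K}_\tau}\mathbf{v}_t$. The same two-index expansion applies, but the sum now runs only over the $|\mathcal{K}_\tau| = kT$ pivotal indices; Assumption 3 lets $\alpha(\tau)^2$ factor out as $\mathbb{E}[\alpha(\tau)^2]$ alongside $\mathbb{E}[\hat{A}_{\grpo}^2]$, and Assumption 1 again kills the cross terms, giving $\mathbb{E}[\|\mathbf{g}_{\vppo}\|^2] \approx \mathbb{E}[\alpha^2]\,\mathbb{E}[\hat{A}_{\grpo}^2]\sum_{t\in\mathcal{K}_\tau}\mathbb{E}[\|\mathbf{v}_t\|^2] \approx kT\,\sigma_v^2\,\mathbb{E}[\alpha^2]\,\mathbb{E}[\hat{A}_{\grpo}^2]$. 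Dividing the two second moments cancels $T\sigma_v^2\mathbb{E}[\hat{A}_{\grpo}^2]$ and, invoking Assumption 4 once more, yields $\text{Var}(\mathbf{g}_{\vppo}) \approx k\,\mathbb{E}[\alpha(\tau)^2]\,\text{Var}(\mathbf{g}_{\grpo})$. The same scaling survives the outer average over the $G$ rollouts in a group, since those are i.i.d.\ given the prompt and the group average divides both variances by the same $G$.

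The step I expect to be the main subtlety is collapsing $\sum_{t\in\mathcal{K}_\tau}\mathbb{E}[\|\mathbf{v}_t\|^2]$ to $kT\sigma_v^2$: this is only legitimate if the per-step gradient second moments are not systematically larger (or smaller) on the high-dependency tokens than elsewhere. If pivotal tokens carried disproportionately large gradient energy, the reduction factor would exceed $k$ in magnitude; if smaller, the reduction would be even more favorable. I would flag this as an explicit modeling assumption — roughly uniform per-token gradient energy, equivalently a token-selection rule uncorrelated with $\|\mathbf{v}_t\|$ — and note that it is precisely what buys the clean closed-form ratio $k\cdot\mathbb{E}[\alpha(\tau)^2]$; everything else follows mechanically once Assumptions 1--4 are granted.
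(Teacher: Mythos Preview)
Your proposal is correct and follows essentially the same route as the paper: expand each estimator's squared norm, factor out the scalar multipliers via Assumptions~2--3, drop cross terms via Assumption~1, and compare the resulting sums using a uniform per-step second-moment approximation before invoking Assumption~4. The paper makes exactly the simplifying assumption you flagged as the main subtlety (writing $\mathbb{E}[\|\mathbf{v}_t\|^2]\approx C$ for all $t$), so your explicit discussion of that point is, if anything, a slight improvement in transparency.
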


\begin{proof}
We will derive and compare the second moments of the GRPO and VPPO gradient estimators.

\textbf{1. Second Moment of the GRPO Estimator.}
First, we analyze the GRPO estimator, $\mathbf{g}_{\grpo}(\tau) = \hat{A}_{\grpo}(\tau) \sum_{t=0}^{T-1} \mathbf{v}_t$.
\begin{align*}
    \mathbb{E}[\|\mathbf{g}_{\grpo}\|^2] &= \mathbb{E}\left[\left\| \hat{A}_{\grpo}(\tau) \sum_{t=0}^{T-1} \mathbf{v}_t \right\|^2\right] \\
    &= \mathbb{E}\left[ \hat{A}_{\grpo}(\tau)^2 \left\| \sum_{t=0}^{T-1} \mathbf{v}_t \right\|^2 \right] \\
    &\overset{\text{Assumption 2}}{=} \mathbb{E}[\hat{A}_{\grpo}(\tau)^2] \cdot \mathbb{E}\left[\left\| \sum_{t=0}^{T-1} \mathbf{v}_t \right\|^2\right] \\
    &= \mathbb{E}[\hat{A}_{\grpo}(\tau)^2] \cdot \mathbb{E}\left[ \left(\sum_{t=0}^{T-1} \mathbf{v}_t\right)^T \left(\sum_{j=0}^{T-1} \mathbf{v}_j\right) \right] \\
    &= \mathbb{E}[\hat{A}_{\grpo}(\tau)^2] \cdot \mathbb{E}\left[ \sum_{t=0}^{T-1} \|\mathbf{v}_t\|^2 + \sum_{t \neq j} \mathbf{v}_t^T \mathbf{v}_j \right] \\
    &= \mathbb{E}[\hat{A}_{\grpo}(\tau)^2] \cdot \left( \sum_{t=0}^{T-1} \mathbb{E}[\|\mathbf{v}_t\|^2] + \sum_{t \neq j} \mathbb{E}[\mathbf{v}_t^T \mathbf{v}_j] \right) \\
    &\overset{\text{Assumption 1}}{\approx} \mathbb{E}[\hat{A}_{\grpo}(\tau)^2] \sum_{t=0}^{T-1} \mathbb{E}[\|\mathbf{v}_t\|^2]
\end{align*}

\textbf{2. Second Moment of the VPPO Estimator.}
Next, we perform the same derivation for the VPPO estimator, $\mathbf{g}_{\vppo}(\tau) = \alpha(\tau) \hat{A}_{\grpo}(\tau) \sum_{t \in \mathcal{K}_\tau} \mathbf{v}_t$.
\begin{align*}
    \mathbb{E}[\|\mathbf{g}_{\vppo}\|^2] &= \mathbb{E}\left[\left\| \alpha(\tau) \hat{A}_{\grpo}(\tau) \sum_{t \in \mathcal{K}_\tau} \mathbf{v}_t \right\|^2\right] \\
    &= \mathbb{E}\left[ \alpha(\tau)^2 \hat{A}_{\grpo}(\tau)^2 \left\| \sum_{t \in \mathcal{K}_\tau} \mathbf{v}_t \right\|^2 \right] \\
    &\overset{\text{Assumption 2}}{=} \mathbb{E}[\alpha(\tau)^2 \hat{A}_{\grpo}(\tau)^2] \cdot \mathbb{E}\left[\left\| \sum_{t \in \mathcal{K}_\tau} \mathbf{v}_t \right\|^2\right] \\
    &\overset{\text{Assumption 3}}{=} \mathbb{E}[\alpha(\tau)^2] \mathbb{E}[\hat{A}_{\grpo}(\tau)^2] \cdot \mathbb{E}\left[ \sum_{t \in \mathcal{K}_\tau} \|\mathbf{v}_t\|^2 + \sum_{t,j \in \mathcal{K}_\tau, t \neq j} \mathbf{v}_t^T \mathbf{v}_j \right] \\
    &\overset{\text{Assumption 1}}{\approx} \mathbb{E}[\alpha(\tau)^2] \mathbb{E}[\hat{A}_{\grpo}(\tau)^2] \sum_{t \in \mathcal{K}_\tau} \mathbb{E}[\|\mathbf{v}_t\|^2]
\end{align*}

\textbf{3. Comparison and Conclusion.}
Assuming the expected norm of the per-step gradients is roughly constant across timesteps, $\mathbb{E}[\|\mathbf{v}_t\|^2] \approx C$, the summations for the GRPO and VPPO estimators simplify. The GRPO sum runs over all $T$ timesteps, while the VPPO sum runs only over the set of pivotal tokens, $\mathcal{K}_\tau$, where $|\mathcal{K}_\tau| = k \cdot T$. This yields:
\begin{align*}
    \mathbb{E}[\|\mathbf{g}_{\grpo}\|^2] &\approx T \cdot C \cdot \mathbb{E}[\hat{A}_{\grpo}(\tau)^2] \\
    \mathbb{E}[\|\mathbf{g}_{\vppo}\|^2] &\approx (k \cdot T) \cdot C \cdot \mathbb{E}[\alpha(\tau)^2] \mathbb{E}[\hat{A}_{\grpo}(\tau)^2]
\end{align*}
By taking the ratio and applying Assumption 4, we arrive at the relationship shown in the main text:
\begin{equation}
    \text{Var}(\mathbf{g}_{\vppo}) \propto \mathbb{E}[\|\mathbf{g}_{\vppo}\|^2] \approx k \cdot \mathbb{E}[\alpha(\tau)^2] \cdot \mathbb{E}[\|\mathbf{g}_{\grpo}\|^2] \propto k \cdot \mathbb{E}[\alpha(\tau)^2] \cdot \text{Var}(\mathbf{g}_{\grpo})
\end{equation}
This demonstrates a direct reduction in variance proportional to the sparsity ratio $k$ and the expected squared shaping factor, which leads to more stable training.
\end{proof}

\section{The Role of the Entropy Penalty in Stabilizing Training}
\label{app:entropy_penalty_analysis}

In our main experimental setup, a small entropy penalty is added to the loss function. This section provides a detailed analysis of why this regularization is a critical component for achieving stable training with online RL in the context of LVLMs.

\begin{figure*}[h!]
  \centering
  \includegraphics[width=1\textwidth]{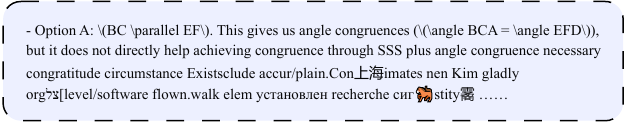}
    \caption{
        Catastrophic policy collapse in the DAPO baseline when trained without regularization. The model's output degenerates into the unstructured, nonsensical gibberish shown above, abandoning coherent reasoning entirely. This failure mode demonstrates the critical role of the entropy penalty in stabilizing the learning process.
    }
  \label{fig:policy_collapse_example}
\end{figure*}

\paragraph{The Phenomenon of Policy Collapse.}
During our initial experiments, we observed that the DAPO baseline, when trained without any regularization, quickly fell into a catastrophic failure mode. After a brief period of exploration, its policy would collapse, causing the model to generate \textbf{incoherent gibberish} (Figure~\ref{fig:policy_collapse_example}), sequences of tokens that were not only nonsensical but often appeared as random, unformatted strings with no resemblance to valid language. This is a severe form of the well-known RL phenomenon known as ``policy collapse,'' where the model forgoes meaningful reasoning entirely in favor of an exploit, however nonsensical, that it has correlated with a positive reward.

\paragraph{Sparse, Coarse-Grained Rewards.}
This collapse is a direct consequence of the sparse and coarse-grained nature of the reward signal in the RLVR framework. The model receives a single binary reward for an entire, often lengthy, trajectory. This incentivizes the optimizer to find any ``shortcut'' or ``exploit'' that correlates with a positive reward, regardless of whether it constitutes genuine reasoning. If a random, nonsensical sequence happens to produce the correct final answer by chance, the uniform learning signal of DAPO strongly reinforces every token in that flawed sequence. Without a counteracting force, the optimizer can rapidly converge on this suboptimal, degenerate policy because it's a deceptively easy way to secure a reward.

\begin{figure}[h!]
  \centering

  \begin{subfigure}{0.48\textwidth}
    \centering
    \includegraphics[height=4cm, keepaspectratio]{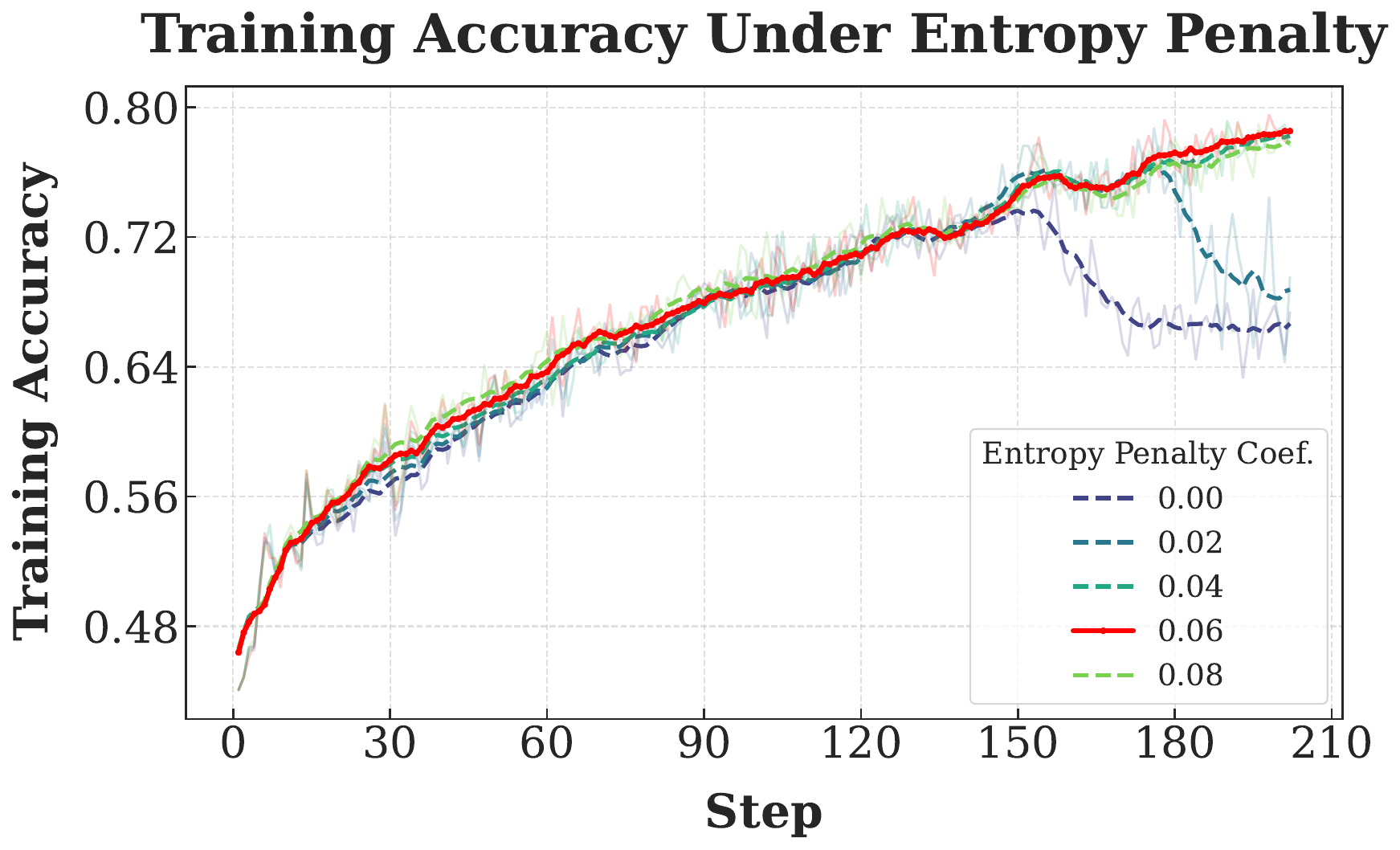}
    \caption{Training Accuracy Dynamics}
    \label{fig:Acc_under_entropy_penalty}
  \end{subfigure}
  \hfill 
  \begin{subfigure}{0.48\textwidth}
    \centering
    \includegraphics[height=4cm, keepaspectratio]{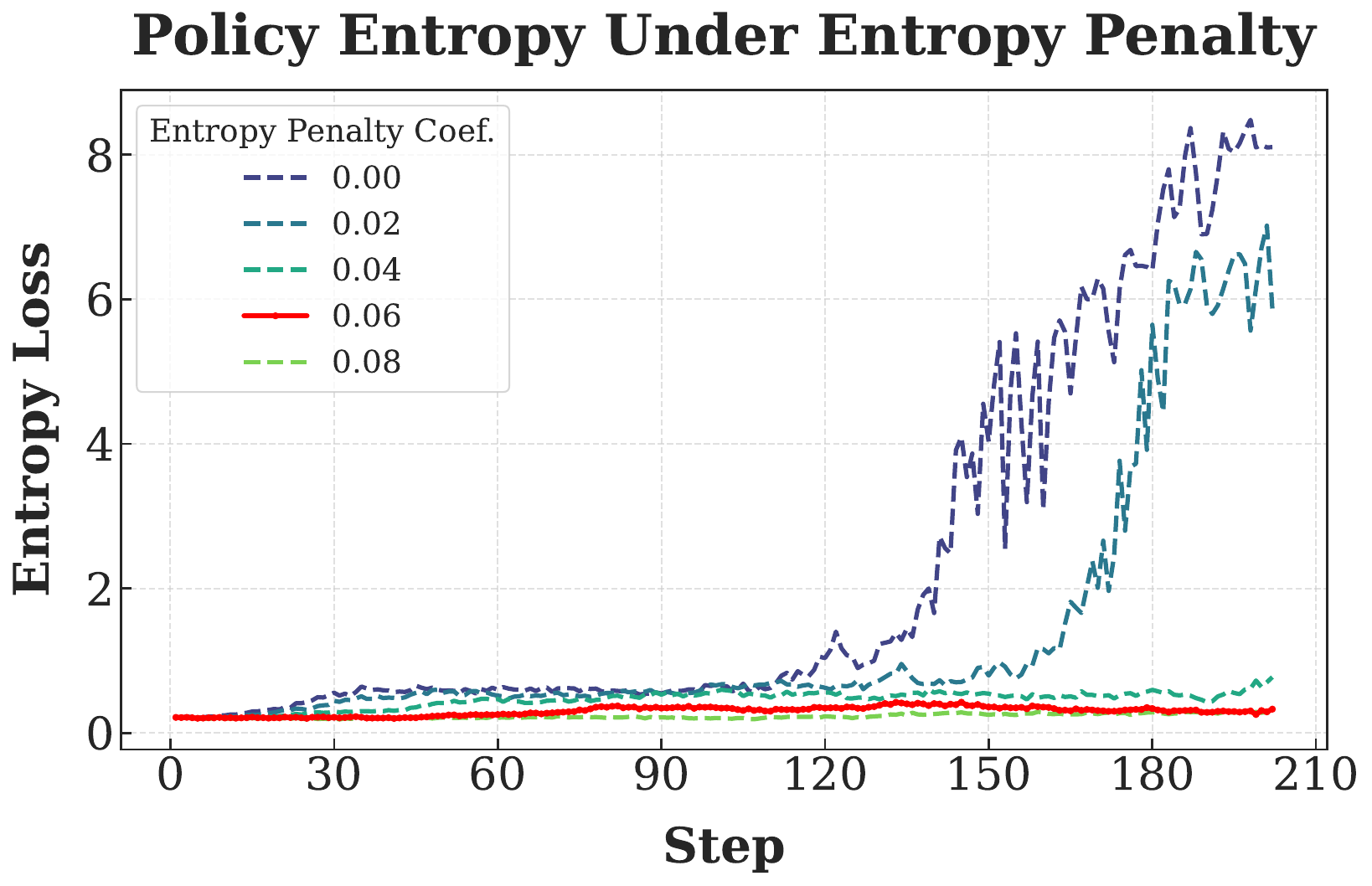}
    \caption{Policy Entropy Dynamics}
    \label{fig:Entropy_loss}
  \end{subfigure}

  \caption{
      \textbf{Effect of the entropy penalty coefficient ($\lambda$) on training dynamics.}
      \textbf{(a)} Training accuracy versus training steps. The unregularized baseline ($\lambda=0.00$) suffers from a sharp performance collapse, while our chosen coefficient of $\lambda=0.06$ achieves the highest and most stable accuracy.
      \textbf{(b)} Policy entropy versus training steps. The accuracy collapse in (a) is shown to be a direct result of uncontrolled entropy divergence when no penalty is applied. The penalty successfully regularizes the policy, preventing this failure mode.
  }
  \label{fig:entropy_ablation_curves} 
\end{figure}

\paragraph{The Entropy Penalty as a Regularizer.}
The entropy penalty serves as an essential stabilizing force. We empirically observed that policy collapse in our setup is consistently accompanied by a sharp and uncontrolled \textbf{increase} in policy entropy. This pathological state occurs when the sparse reward fails to guide the optimizer, which can then push the policy into a chaotic regime that manifests as incoherent gibberish. To counteract this and determine the optimal setting, we performed an ablation study on the entropy penalty coefficient. Figure~\ref{fig:entropy_ablation_curves} visualizes the direct impact of this penalty on the training dynamics, showing how the policy entropy diverges and training accuracy collapses without regularization. The final performance for each setting is presented in Table~\ref{tab:entropy_penalty_ablation}. The combined results demonstrate that the penalty is critical for preventing this failure mode. We found that a coefficient of 0.06 strikes the best empirical balance, achieving the highest and most stable training accuracy by keeping exploration within the bounds of coherent language generation.

\begin{table}[h!]
    \caption{
        \textbf{Ablation study on the entropy penalty coefficient for the DAPO baseline.}
        We compare the performance of the baseline under different entropy penalty settings. While training without a penalty ($0.0$) is possible, it results in extremely low performance due to policy instability. A coefficient of 0.06 is shown to be crucial for achieving stable and effective training.
    }
    \centering
    \small
    \renewcommand{\arraystretch}{1.4} 
    \renewcommand{\theadfont}{\scriptsize\bfseries}
    \setlength{\tabcolsep}{2.5pt}
    \begin{tabular}{lccccccccc} 
        \toprule
        \textbf{Entropy Penalty} & \thead{MathVerse} & \thead{DynaMath} & \thead{MMK12} & \thead{Geo3k} & \thead{MathVision} & \thead{We-Math} & \thead{LogicVista} & \thead{MMMU-Pro} & \thead{Avg.} \\
        \midrule
        0.0 (No Penalty) & 60.2 & 61.3 & 79.4 & 33.8 & 26.0 & 59.8 & 38.4 & 32.8 & 49.0 \\
        0.02 & 66.2 & 64.6 & 80.2 & 39.9 & 28.0 & 65.9 & 42.8 & 34.1 & 52.7 \\
        0.04 & 68.3 & 64.7 & 80.9 & 42.2 & 29.4 & 67.9 & 46.0 & 35.1 & 54.3 \\
        \rowcolor{mygray} \textbf{0.06 (Default)} & \textbf{68.3} & \textbf{66.6} & \textbf{82.1} & \textbf{41.5} & \textbf{30.5} & \textbf{68.0} & \textbf{46.8} & \textbf{35.9} & \textbf{55.0} \\
        0.08 & 69.3 & 66 & 81.2 & 42.9 & 31.1 & 67.8 & 46 & 35.4 & 55.0 \\
        \bottomrule
    \end{tabular}
    \label{tab:entropy_penalty_ablation}
\end{table}

\paragraph{Implications for \vppo{}.}
To ensure a fair and controlled comparison, we apply the same entropy penalty (with a coefficient of 0.06) to both the DAPO baseline and our \vppo{} method. This addition is primarily to stabilize the baseline, allowing for a direct and meaningful performance comparison. Within the standard two-epoch training regime, this penalty successfully prevents the baseline's immediate policy collapse. 
By focusing updates on a sparse, meaningful set of pivotal tokens, \vppo{} is inherently more robust to the noisy, uniform rewards that destabilize the baseline, underscoring the profound stability benefits of our hierarchical signal modulation.

\section{Ablation Study on Masking Strategy for dependency Calculation}
\label{app:masking_ablation}

In our main paper, the calculation of visual dependency, $\mathcal{S}(s_t, I) := D_{\text{KL}}\left( \pi_\theta(\cdot | s_t, I) \parallel \pi_\theta(\cdot | s_t, I') \right)$, relies on a perturbed, non-informative image $I'$. The choice of this perturbation method is a key hyperparameter that can influence which tokens are identified as dependent. To validate our choice, we conduct an ablation study comparing our default strategy against several common alternatives.

\begin{figure}[h!]
    \centering

    \begin{subfigure}{0.19\textwidth}
        \centering
        \includegraphics[width=\linewidth, keepaspectratio]{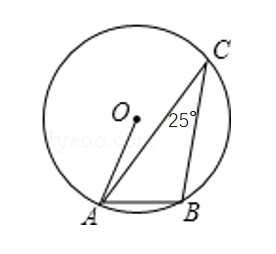}
        \caption*{(a) Original} 
        \label{fig:mask_original}
    \end{subfigure}
    \hfill
    \begin{subfigure}{0.19\textwidth}
        \centering
        \includegraphics[width=\linewidth, keepaspectratio]{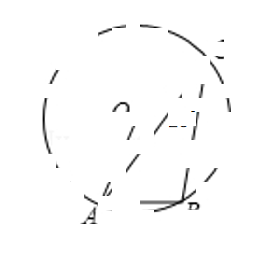}
        \caption*{(b) Patch Blacken}
        \label{fig:mask_patch_blackening}
    \end{subfigure}
    \hfill
    \begin{subfigure}{0.19\textwidth}
        \centering
        \includegraphics[width=\linewidth, keepaspectratio]{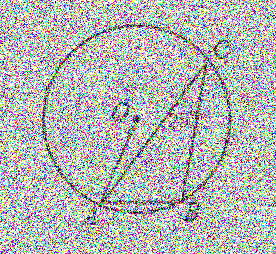}
        \caption*{(c) Gauss Noise}
        \label{fig:mask_gaussian_noise}
    \end{subfigure}
    \hfill
    \begin{subfigure}{0.19\textwidth}
        \centering
        \includegraphics[width=\linewidth, keepaspectratio]{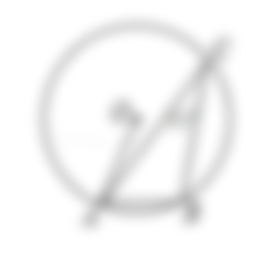}
        \caption*{(d) Gauss Blur}
        \label{fig:mask_gaussian_blur}
    \end{subfigure}
    \hfill
    \begin{subfigure}{0.19\textwidth}
        \centering
        \includegraphics[width=\linewidth, keepaspectratio]{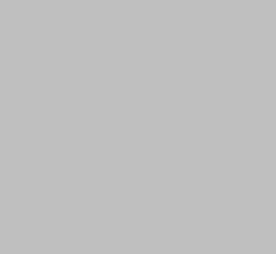}
        \caption*{(e) Complete Mask}
        \label{fig:mask_complete_masking}
    \end{subfigure}

    \caption{
        \textbf{Visual examples of the masking strategies for dependency calculation.}
        Panel (a) shows the original, unperturbed image. Panels (b)-(e) illustrate the effect of the different image perturbation methods evaluated in our ablation study, corresponding to the methods tested in Table~\ref{tab:masking_strategy_ablation}.
    }
    \label{fig:masking_strategy_examples}
    
\end{figure}

These different perturbation methods are visualized in Figure~\ref{fig:masking_strategy_examples}. The specific strategies evaluated are as follows:

\begin{itemize}
    \item \textbf{Random Patch Blackening (Our Default):} This is the strategy used for all main results. Following the ViT architecture of our base model, the image is divided into patches of size 14x14. Each patch is then independently dropped (set to black) with a probability of 0.5.
    \item \textbf{Additive Gaussian Noise:} Gaussian noise with a standard deviation of 189 is added to each pixel value in the image. This value was calibrated such that a pixel has approximately a 50\% chance of being saturated to its maximum or minimum value, effectively losing its original information.
    \item \textbf{Gaussian Blur:} A Gaussian blur with a radius of 6.0 is applied to the entire image, degrading fine-grained details.
    \item \textbf{Complete Masking:} The entire image is replaced with a solid, neutral grey canvas (RGB value 128, 128, 128), removing all visual information.
\end{itemize}

For each strategy, we trained our model using the same hyperparameters and evaluated its performance. The results are presented in Table~\ref{tab:masking_strategy_ablation}.

\begin{table}[h!]
    \caption{
        \textbf{Ablation study on the masking strategy for visual dependency calculation.}
        We compare the impact of different image perturbation methods on final model performance. The results validate our choice of ``Random Patch Blackening'' as the most effective strategy.
    }
    \centering
    \small
    \renewcommand{\arraystretch}{1.4} 
    \renewcommand{\theadfont}{\scriptsize\bfseries}
    \setlength{\tabcolsep}{1.7pt}
    \begin{tabular}{lccccccccc} 
        \toprule
        \textbf{Masking Strategy} & \thead{MathVerse} & \thead{DynaMath} & \thead{MMK12} & \thead{Geo3k} & \thead{MathVision} & \thead{We-Math} & \thead{LogicVista} & \thead{MMMU-Pro} & \thead{Avg.} \\
        \midrule
        \rowcolor{mygray} \textbf{Random Patch Blackening} & \textbf{71.6} & \textbf{68.1} & \textbf{82.8} & \textbf{46.5} & \textbf{33.3} & \textbf{71.5} & \textbf{47.9} & \textbf{37.9} & \textbf{57.5} \\
        Additive Gaussian Noise & 70.2 & 67.7 & 82.3 & 43.9 & 32.9 & 69.8 & 47.0 & 38.0 & 56.5 \\
        Gaussian Blur & 69.1 & 68.2 & 82.4 & 45.4 & 32.5 & 70.0 & 46.9 & 37.0 & 56.4 \\
        Complete Masking & 71.0 & 68.1 & 82.1 & 43.3 & 32.8 & 69.0 & 47.0 & 37.9 & 56.4 \\
        \bottomrule
    \end{tabular}
    \label{tab:masking_strategy_ablation}
\end{table}

\paragraph{Analysis of Results.}
The results in Table~\ref{tab:masking_strategy_ablation} confirm that our default strategy, \textbf{Random Patch Blackening}, achieves the best overall performance with an average accuracy of 57.5\%. It demonstrates a consistent, albeit modest, advantage over Additive Gaussian Noise (56.5\%), Gaussian Blur (56.4\%), and Complete Masking (56.4\%).

We hypothesize that this strategy's effectiveness stems from its patch-based nature, which aligns with the model's underlying ViT architecture. By removing entire, discrete patches of the image, this method forces the model to perform more robust, localized reasoning from incomplete visual evidence. This is a more challenging and informative task than reasoning from a globally degraded ``gist'' of the image, as might be the case with noise or blur. Interestingly, Complete Masking also performs competitively, suggesting that a significant portion of the dependency signal is captured by the stark contrast between the presence and complete absence of visual information. However, the consistent edge of \textbf{Random Patch Blackening} indicates that forcing the model to reason with \textit{partial} visual context provides a more effective and nuanced signal for identifying pivotal tokens. These findings validate our choice of using \textbf{Random Patch Blackening} as the default perturbation method for all experiments in the main paper.

\section{Ablation Study on Methods for dependency Calculation}
\label{app:dependency_method_ablation}

Our proposed method relies on quantifying visual dependency by measuring the KL divergence between the policy's full output distributions, $\pi_\theta(\cdot | s_t, I)$ and $\pi_\theta(\cdot | s_t, I')$. While principled, this is not the only way to measure the influence of a visual input. To validate our choice, we conduct an ablation study comparing our default method against other computationally-feasible, alternative token-scoring heuristics.

The methods evaluated are as follows:
\begin{itemize}
    \item \textbf{KL Divergence (Our Default):} This is the strategy used for all main results. It measures the total change across the entire vocabulary distribution. Our implementation uses a memory-efficient estimation of the true KL value.
    \begin{equation*}
        \mathcal{S}_{\text{KL}}(s_t, I) = D_{\text{KL}}\left( \pi_\theta(\cdot | s_t, I) \parallel \pi_\theta(\cdot | s_t, I') \right)
    \end{equation*}

    \item \textbf{Jensen-Shannon Divergence (JSD):} This method is a symmetrized and smoothed version of KL divergence. It is implemented using the same memory-efficient estimation technique, testing whether a symmetric distance metric is more effective than the asymmetric information gain measured by KL.
    \begin{equation*}
        \mathcal{S}_{\text{JSD}}(s_t, I) = D_{\text{JS}}\left( \pi(\cdot | s_t, I) \parallel \pi(\cdot | s_t, I') \right)
    \end{equation*}

    \item \textbf{Top-1 Probability Drop:} This simple heuristic measures only the change in probability for the token $o_t$ that was actually sampled, testing how much the image boosts the confidence of the final choice.
    \begin{equation*}
        \mathcal{S}_{\text{Top-1}}(s_t, I) = \pi_\theta(o_t | s_t, I) - \pi_\theta(o_t | s_t, I')
    \end{equation*}
\end{itemize}

For each strategy, we trained our model using the same hyperparameters and evaluated its performance. The results are presented in Table~\ref{tab:dependency_method_ablation}.

\begin{table}[h!]
    \caption{
        \textbf{Ablation study on the method for dependency calculation.}
        We compare the impact of different computationally-feasible token-scoring heuristics on final model performance. The results validate our choice of using KL Divergence as the most effective method for quantifying visual dependency.
    }
    \centering
    \small
    \renewcommand{\arraystretch}{1.4} 
    \renewcommand{\theadfont}{\scriptsize\bfseries}
    \setlength{\tabcolsep}{2.1pt}
    \begin{tabular}{lccccccccc} 
        \toprule
        \textbf{Guidance Metric} & \thead{MathVerse} & \thead{DynaMath} & \thead{MMK12} & \thead{Geo3k} & \thead{MathVision} & \thead{We-Math} & \thead{LogicVista} & \thead{MMMU-Pro} & \thead{Avg.} \\
        \midrule
        \rowcolor{mygray} \textbf{KL Divergence (Default)} & \textbf{71.6} & \textbf{68.1} & \textbf{82.8} & \textbf{46.5} & \textbf{33.3} & \textbf{71.5} & \textbf{47.9} & \textbf{37.9} & \textbf{57.5} \\
        JS Divergence & 71.8 & 67.6 & 82.7 & 45.1 & 32.6 & 70.8 & 47.8 & 36.9 & 56.9 \\
        Top-1 Probability Drop & 61.5 & 64.4 & 74.9 & 31.5 & 30.1 & 62.3 & 44.7 & 33.3 & 50.3 \\
        \bottomrule
    \end{tabular}
    \label{tab:dependency_method_ablation}
\end{table}

The most significant finding is the substantial underperformance of the Top-1 Probability Drop heuristic, which lags behind our default method by 7.2\% in average accuracy. This demonstrates that a simple heuristic focused only on the single sampled token is an insufficient proxy for visual reliance. It captures only a fraction of the total change and is blind to significant shifts happening elsewhere in the output distribution, such as when the visual input dramatically alters the ranking of the next most likely candidates.

In contrast, Jensen-Shannon Divergence (JSD) performs very competitively, achieving a result only 0.6\% below our default. This is expected, as both KL and JS Divergence are principled, full-distribution metrics that measure the overall change between the two output distributions. However, the slight but consistent advantage of \textbf{KL Divergence} is theoretically significant. \textbf{KL Divergence} is an asymmetric measure of \textit{information gain}, while JSD is a symmetric \textit{distance metric}. The core motivation of our work is to specifically measure the \textit{information gain} provided by the visual input to guide the policy. Therefore, \textbf{KL Divergence} is the more theoretically aligned choice. The empirical results, validating that this principled selection also yields the best performance, confirm its superiority for this task.

\section{Ablation Study on Rollout Group Size}
\label{app:rollout_ablation}

The number of rollouts per prompt, or the group size ($G$), is a critical hyperparameter in online RL algorithms like \vppo{}. It directly influences the trade-off between the quality of the advantage estimation and the computational cost of data generation. A larger group size provides a more stable and accurate estimate of the expected reward, but at the cost of increased computation.

To validate our choice and explore this trade-off, we conduct an ablation study on the rollout group size. Our main experiments use a default setting of $G=8$. We evaluate this against a smaller group size of $G=5$ and larger group sizes of $G=12$ and $G=16$ to assess potential performance gains from more extensive sampling. The results, presented in Table~\ref{tab:rollout_ablation}, show the impact of this hyperparameter on final model performance.

\begin{table}[h!]
    \caption{
        \textbf{Ablation study on the number of rollouts per prompt ($G$).}
        We compare model performance across different group sizes. The results validate our choice of $G=8$ as providing a strong balance between advantage estimation quality and computational efficiency.
    }
    \centering
    \small
    \renewcommand{\arraystretch}{1.4} 
    \renewcommand{\theadfont}{\scriptsize\bfseries}
    \setlength{\tabcolsep}{2.4pt}
    \begin{tabular}{lccccccccc} 
        \toprule
        \textbf{Rollout Group Size ($G$)} & \thead{MathVerse} & \thead{DynaMath} & \thead{MMK12} & \thead{Geo3k} & \thead{MathVision} & \thead{We-Math} & \thead{LogicVista} & \thead{MMMU-Pro} & \thead{Avg.} \\
        \midrule
        $G=5$ & 70.7 & 68.2 & 80.7 & 44.8 & 32.9 & 69.5 & 48.4 & 36.8 & 56.5 \\
        \rowcolor{mygray} \textbf{$G=8$ (Default)} & \textbf{71.6} & \textbf{68.1} & \textbf{82.8} & \textbf{46.5} & \textbf{33.3} & \textbf{71.5} & \textbf{47.9} & \textbf{37.9} & \textbf{57.5} \\
        $G=12$ & 71.3 & 68.1 & 83.5 & 46.9 & 32.9 & 70.2 & 48.3 & 37.8 & 57.4 \\
        $G=16$ & 72.2 & 68.4 & 84.2 & 46.5 & 33.2 & 71.1 & 48.7 & 37.0 & 57.7 \\
        \bottomrule
    \end{tabular}
    \label{tab:rollout_ablation}
\end{table}

The results in Table ~\ref{tab:rollout_ablation} reveal a clear trend of \textbf{diminishing returns} as the group size increases. Increasing the group size from $G=5$ to our default of $G=8$ yields a substantial performance gain of 1.0\% on average, demonstrating the value of a more stable advantage estimate.

However, further increases in group size offer minimal additional benefit. Increasing the rollouts by 50\% to $G=12$ results in a 0.1\% decrease in average performance, while doubling the rollouts to $G=16$ provides only a marginal 0.2\% improvement over our default setting. Given that the computational cost of the rollout phase scales linearly with the group size, doubling the work for such a small gain is not an efficient trade-off. This analysis confirms that our default setting of $G=8$ strikes an optimal balance between the quality of the advantage estimation and computational efficiency, capturing the vast majority of the potential performance gains without incurring unnecessary computational expense.

\section{\textcolor{black}{Soft Mask Calibration for Gradient Filtering}}
\label{app:soft_mask}

\textcolor{black}{For the ablation study comparing binary and soft masks, we designed a continuous soft mask to ensure a fair comparison with our main approach. The method is carefully calibrated so that the \textit{average} weight assigned to tokens in a trajectory matches our target filtering ratio ($k=0.4$). This prevents the soft mask from simply applying a universally higher or lower learning signal and instead tests the effect of a graded vs. a discrete update. The process involves three steps for each trajectory:}

\begin{enumerate}
    \item \textcolor{black}{\textbf{Z-Score Normalization:} We first normalize the raw visual dependency scores ($S_t$) within the trajectory to have a mean of 0 and a standard deviation of 1. This converts them into Z-scores, making the subsequent calibration step robust to varying score distributions across different trajectories.}
    $$
    \textcolor{black}{Z_t = \frac{S_t - \mu_S}{\sigma_S + \epsilon}}
    $$

    \item \textcolor{black}{\textbf{Offset Calibration:} We then find a unique offset, $c$, which, when subtracted from the Z-scores before applying a sigmoid function, results in the desired average weight. This offset is solved numerically to satisfy the constraint:}
    $$
    \textcolor{black}{\frac{1}{N} \sum_{t=1}^{N} \text{sigmoid}(Z_t - c) = \mu_{\text{target}}}
    $$
    \textcolor{black}{where $N$ is the number of tokens in the trajectory and $\mu_{\text{target}}$ is our target average weight (0.4).}

    \item \textcolor{black}{\textbf{Weight Generation:} Finally, the calibrated weight $w_t$ for each token's gradient is calculated using the determined offset:}
    $$
    \textcolor{black}{w_t = \text{sigmoid}(Z_t - c)}
    $$
\end{enumerate}

\section{\textcolor{black}{Analysis of Computational Overhead}}
\label{sec:appendix_overhead}

\textcolor{black}{To address the computational cost of the second forward pass required for token perception, we conducted a detailed empirical analysis. First, we quantified the overhead against the DAPO baseline. As shown in Table~\ref{tab:overhead_comparison}, this introduces a modest and consistent overhead of approximately \texttt{10}\% across both 7B and 32B model scales. The low cost is attributable to calculating all token probabilities in a single, parallel forward pass.}

\begin{table}[h!]
\centering
\footnotesize
\setlength{\tabcolsep}{3pt}
\renewcommand{\arraystretch}{1.4} 
\renewcommand{\theadfont}{\scriptsize\bfseries}
\caption{\textcolor{black}{Comparison of total training time, training throughput, and computational overhead between the DAPO baseline and our VPPO method. The overhead introduced by VPPO's second forward pass is a consistent \textasciitilde10\% across both 7B and 32B model scales.}}
\label{tab:overhead_comparison}
\begin{tabular}{@{}llccc@{}}
\toprule
\textbf{Model Scale} & \thead{Method} & \thead{\begin{tabular}[c]{@{}c@{}}Total Training Time \\ (hours)\end{tabular}} & \thead{\begin{tabular}[c]{@{}c@{}}Training Throughput \\ (samples/sec)\end{tabular}} & \thead{Overhead (\%)} \\ \midrule
\textbf{\textcolor{black}{7B}} & \textcolor{black}{DAPO} & \textcolor{black}{15.5} & \textcolor{black}{$\sim$1.39} & \textcolor{black}{-} \\
\textcolor{black}{(8x H800)} & \textbf{\textcolor{black}{VPPO}} & \textbf{\textcolor{black}{17.0}} & \textbf{\textcolor{black}{$\sim$1.27}} & \textbf{\textcolor{black}{+9.7\%}} \\
\midrule
\textbf{\textcolor{black}{32B}} & \textcolor{black}{DAPO} & \textcolor{black}{91.2} & \textcolor{black}{$\sim$0.24} & \textcolor{black}{-} \\
\textcolor{black}{(32x H800)} & \textbf{\textcolor{black}{VPPO}} & \textbf{\textcolor{black}{100.3}} & \textbf{\textcolor{black}{$\sim$0.22}} & \textbf{\textcolor{black}{+10.0\%}} \\
\bottomrule
\end{tabular}
\end{table}

\textcolor{black}{While the overhead is minor, we conducted a more rigorous evaluation under a fixed time budget to confirm that VPPO's performance gains stem from improved learning efficiency. To this end, we trained the 7B DAPO baseline for an extended 17.0 hours, matching the exact training time of our VPPO-7B model.}

\begin{table}[h!]
\centering
\caption{\textcolor{black}{Performance comparison under an equal time budget (17.0 hours) for 7B models. When given the same computational resources, VPPO significantly outperforms the DAPO baseline, indicating superior learning efficiency.}}
\label{tab:equal_budget_comparison}

\footnotesize
\setlength{\tabcolsep}{3pt}
\renewcommand{\arraystretch}{1.4} 
\renewcommand{\theadfont}{\scriptsize\bfseries}

\resizebox{\textwidth}{!}{%
\begin{tabular}{@{}lcccccccccc@{}}
\toprule
\textbf{Method (7B Model)} & \thead{Time} & \thead{MathVerse} & \thead{DynaMath} & \thead{MMK12} & \thead{Geo3k} & \thead{MathVision} & \thead{We-Math} & \thead{LogicVista} & \thead{MMMU-Pro} & \thead{Avg.} \\ \midrule
DAPO (Baseline) & 15.5h & 68.3 & 66.6 & 82.1 & 41.5 & 30.5 & 68.0 & 46.8 & 35.9 & 55.0 \\
\textcolor{black}{DAPO (Equal-Budget)} & \textbf{\textcolor{black}{17.0h}} & \textcolor{black}{68.6} & \textcolor{black}{67.0} & \textcolor{black}{81.9} & \textcolor{black}{42.1} & \textcolor{black}{30.6} & \textcolor{black}{67.6} & \textcolor{black}{46.2} & \textcolor{black}{36.3} & \textcolor{black}{55.0} \\
\rowcolor{mygray} \textbf{VPPO (Ours)} & \textbf{17.0h} & \textbf{71.6} & \textbf{68.1} & \textbf{82.8} & \textbf{46.5} & \textbf{33.3} & \textbf{71.5} & \textbf{47.9} & \textbf{37.9} & \textbf{57.5} \\ \bottomrule
\end{tabular}%
}
\end{table}

\textcolor{black}{The results of this equal-budget comparison, presented in Table~\ref{tab:equal_budget_comparison}, are definitive. The baseline's performance stagnates even with the additional training time, whereas VPPO achieves a \texttt{2.5}-point average gain. This demonstrates that by shaping the learning signal at both the trajectory and token levels, VPPO acquires complex reasoning skills more effectively within the same time budget. These findings validate that the minor computational cost is a highly effective trade-off for the substantial and broad-based improvements in multimodal reasoning.}

\section{Limitations}
\label{app:limitations}

While our results demonstrate the effectiveness of \vppo{}, it is important to acknowledge its current limitations and outline avenues for future research.

\paragraph{Computational Overhead.}
Our method introduces a modest and fully manageable computational overhead. To compute the KL divergence, \vppo{} requires a second forward pass through the model using a perturbed (masked) visual input during the rollout phase. Empirically, we found this resulted in only a minor increase in total training time (approximately a 10\% increase, from 15.5 to 17 hours on our 7B setup). Given the significant gains in final performance and training stability, we believe this minor additional cost represents a highly favorable and practical trade-off. However, exploring even more efficient, single-pass approximations of visual dependency remains an interesting direction for future research.

\paragraph{Scope of Generalization.}
Our experiments have demonstrated the effectiveness of \vppo{} on models up to the 32B parameter scale. While the strong results on both 7B and 32B models suggest a positive scaling trend, the efficacy of our method on extremely large-scale models (e.g., 72B+ parameters) has not yet been verified. Such models may exhibit different emergent properties, and further research is needed to confirm if our hierarchical modulation remains optimal at that scale. Furthermore, the benefits of \vppo{} were demonstrated on reasoning-intensive benchmarks (e.g., math, geometry, logic). Its applicability to more subjective or creative tasks, such as detailed image captioning or visual storytelling, where the notion of a single ``visually-grounded'' reasoning chain is less clear, remains an open question.

\paragraph{Methodological Assumptions and Hyperparameters.}
The dependency calculation at the core of \vppo{} is contingent on the choice of image perturbation method. Our ablation study (Appendix~\ref{app:masking_ablation}) validates our choice of \textit{Random Patch Blackening}, but it is plausible that the optimal masking strategy is task- or domain-dependent. Similarly, while our ablations (Subsection~\ref{subsec:ablation}) identified optimal values for the key hyperparameters, i.e. the filtering ratio $k$ and the shaping range $[\beta_{\min}, \beta_{\max}]$, these values were determined on our specific training dataset and may require re-tuning when applying \vppo{} to new datasets or model scales to achieve maximum performance.

\section{Analysis of the Training Dataset}
\label{app:dataset_analysis}

This section provides further details on the \texttt{ViRL39K} dataset~\citep{wang2025vl}, which serves as the foundation for our reinforcement learning experiments. The choice of this dataset was deliberate, as its core properties align perfectly with the requirements for training a robust multimodal reasoning model.

\paragraph{Topical Diversity and Reasoning Depth.}
A primary strength of \texttt{ViRL39K} is its broad topical diversity. The dataset is not confined to a single domain but instead contains approximately 39,000 queries spanning a wide range of challenging subjects, including mathematics, physics, chemistry, biology, and chart interpretation. This diversity is crucial for training a general-purpose reasoning model, as it prevents overfitting to a narrow task distribution and encourages the development of more fundamental, transferable reasoning skills.

\paragraph{Suitability for Reinforcement Learning.}
The most critical feature of \texttt{ViRL39K} for our study is its verifiability. Every instance in the dataset is programmatically generated and comes with a definitive, unambiguous ground-truth answer. This property is indispensable for any RLVR framework, as it allows for the implementation of a clean, reliable, and automated reward function. By enabling a simple binary accuracy signal, it removes any need for subjective, model-based judges and ensures that the learning process is guided by objective correctness. For a comprehensive overview of the dataset's construction process and statistical breakdown, we refer the reader to the original publication.

\section{Analysis of Evaluation Benchmarks}
\label{app:benchmark_analysis}

This section provides a brief analysis of the eight benchmarks used in our main evaluation. We deliberately selected this suite to cover a wide spectrum of challenges, from domain-specific mathematical skills to general logical cognition, ensuring a holistic assessment of our model's capabilities.

\paragraph{Mathematical and Geometric Reasoning.}
This category forms the core of our evaluation, testing deep, domain-specific skills.
\begin{itemize}
    \item \textbf{DynaMath}~\citep{zou2024dynamath} is a unique benchmark designed to test the \textit{robustness} of visual mathematical reasoning. Instead of using a static set of questions, it employs program-based generation to create numerous variants of seed problems, systematically altering numerical values and function graphs to challenge a model's ability to generalize rather than memorize.
    
    \item \textbf{Geo3k}~\citep{lu2021inter} is a large-scale benchmark focused on high-school level \textit{geometry}. Its key feature is the dense annotation of problems in a formal language, making it particularly well-suited for evaluating interpretable, symbolic reasoning approaches.
    
    \item \textbf{MathVerse}~\citep{zhang2024mathverse} is specifically designed to answer the question: ``Do MLLMs truly see the diagrams?'' It tackles the problem of textual redundancy by providing six distinct versions of each problem, systematically shifting information from the text to the diagram. This allows for a fine-grained analysis of a model's reliance on visual versus textual cues.
    
    \item \textbf{MATH-Vision}~\citep{wang2024measuring} elevates the difficulty by sourcing its problems from \textit{real math competitions} (e.g., AMC, Math Kangaroo). Spanning 16 mathematical disciplines and 5 difficulty levels, it provides a challenging testbed for evaluating advanced, competition-level multimodal reasoning.
    
    \item \textbf{MMK12}~\citep{meng2025mm} is a benchmark focused on K-12 level multimodal mathematical problems. It provides a strong test of foundational math reasoning skills that are essential for more advanced applications.
    
    \item \textbf{We-Math}~\citep{qiao2024we} introduces a novel, human-centric evaluation paradigm. It assesses reasoning by \textit{decomposing composite problems into sub-problems} based on a hierarchy of 67 knowledge concepts. This allows for a fine-grained diagnosis of a model's specific strengths and weaknesses, distinguishing insufficient knowledge from failures in generalization.
\end{itemize}

\paragraph{Logical Reasoning.}
To assess more general cognitive abilities, we include a dedicated logical reasoning benchmark.
\begin{itemize}
    \item \textbf{LogicVista}~\citep{xiao2024logicvista} is designed to fill a critical gap by evaluating \textit{general logical cognition} beyond the mathematical domain. It covers five core reasoning skills (inductive, deductive, numerical, spatial, and mechanical) across a variety of visual formats, testing the fundamental reasoning capabilities that underlie many complex tasks.
\end{itemize}

\paragraph{Multi-discipline Reasoning.}
Finally, to test performance on challenging, college-level problems that require true multimodal integration, we use a robust version of a well-known benchmark.
\begin{itemize}
    \item \textbf{MMMU-Pro}~\citep{yue2024mmmu} is a hardened version of the popular MMMU benchmark. It was specifically created to be unsolvable by text-only models by filtering out questions with textual shortcuts, augmenting the number of choices to reduce guessing, and introducing a vision-only format. It serves as a strong test of a model's ability to seamlessly integrate visual and textual information in a high-stakes, academic context.
\end{itemize}

\section{Prompt Template}
\label{app:templates}

For all training and evaluation experiments, we used the single, standardized prompt template shown below. Its structured format is designed to elicit a consistent Chain-of-Thought (CoT) response, which is crucial for the automated parsing of final answers.

\definecolor{rliableblue}{HTML}{77AADD}
\newcommand{\placeholder}[1]{\textcolor{red}{\{#1\}}}

\begin{tcolorbox}[colback=rliableblue!10!white,colframe=black,boxrule=1pt,boxsep=2pt,top=3pt,bottom=3pt,left=2pt,right=2pt]
\begin{center}
\textbf{Reasoning Template}
\end{center}
\textbf{SYSTEM:} \\ You are a helpful assistant. \\
\\
\textbf{USER:} \\ \placeholder{question} \\ \\ You first think through the reasoning process as an internal monologue, enclosed within <think> </think> tags. Then, provide your final answer enclosed within \textbackslash boxed\{\}.
\end{tcolorbox}

\section{Qualitative Case Studies: VPPO vs. Baseline}
\label{app:case_studies}

To provide a more intuitive understanding of how \vppo{} improves reasoning performance, this section presents a qualitative analysis of three representative examples. For each case, we contrast the failure mode of the baseline with the correct reasoning process of our \vppo{}-7B model on the exact same problem. Notably, for each of these examples, our \vppo{}-7B model \textbf{produced the correct answer on all eight of its generation passes}, demonstrating the stability and robustness of its learned reasoning process.

These case studies are specifically chosen to highlight the practical impact of our hierarchical signal modulation. They illustrate how the baseline's uniform learning signal can lead to critical errors in visual perception or logical deduction, while \vppo{}'s targeted approach successfully and consistently guides the model to the correct solution by focusing on pivotal, visually-grounded steps.

\newtcolorbox{promptbox}[3][Judge Prompt]{
colback=black!5!white,
arc=5pt, 
boxrule=0.5pt,
fonttitle=\bfseries,
title=#1, 
before upper={\small}, fontupper=\fontfamily{ptm}\selectfont,
colframe=#2,
label=#3,
}
\definecolor{lightorange}{RGB}{253, 208, 162}
\definecolor{mintgreen}{RGB}{173, 223, 173}
\newcommand{\red}[1]{\textcolor{red}{#1}}
\newcommand{\green}[1]{\textcolor{green}{#1}}

\begin{figure}[!p] 
\centering 
\vfill 
\begin{promptbox}[Generated Sample 1 by ThinkLite-VL-7B]{lightorange}{prompt:orange}
\includegraphics[width=0.35\textwidth]{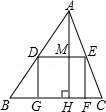}

\textbf{Question}: As shown in the figure, a square DEFG model should be cut on a piece of triangle ABC paper. Among them, G and F are on BC, D and E are on AB and AC respectively, AH perpendicular  BC and it intersects DE at M, if BC = 12.0, AH = 8.0, then the edge length of the square DEFG is ()

\textbf{Choices}:
(A) $\frac{24}{5}$cm \quad (B) 4cm \quad (C) $\frac{24}{7}$cm \quad (D) 5cm

\textbf{\red{Ground truth: (A) $\frac{24}{5}$cm}}

\textbf{<think>} \\
To solve for the edge length of the square DEFG, we need to leverage the properties of similar triangles and the given information.\\
\\
1. \textbf{Identify the similarity of triangles:} Since DEFG is a square inscribed in the triangle ABC, $\triangle ADE$ and $\triangle ABC$ are similar by AA similarity (Angle-Angle similarity) criterion. This is because $\angle ADE = \angle ABC$ and $\angle DAE = \angle BAC$.\\
\\
2. \textbf{Use the similarity ratio:} The ratio of the corresponding sides of similar triangles is equal. Therefore, we have:
   \[
   \frac{DE}{BC} = \frac{AM}{AH}
   \]
\\
3. \textbf{Express the given and unknown values:} Here, DE is the side length of the square we need to find, BC = 12 cm, and AH = 8 cm. We need to find the length of AM (the height from A to DE).\\
\\
4. Since G and F are on BC, and E and D are on AB and AC respectively, we can use the fact that the height from A to DE is also the height from A to BC divided by the ratio of the side lengths of the triangles. The height from A to DE (AM) is the height from A to BC (AH) minus the height from M to BC.\\
\\
5. Let's denote the side length of the square as $x$. Then, the height from A to DE (AM) is $8 - \frac{8x}{12} = 8 - \frac{2x}{3}$ \red{(Incorrect expression for AM; it should be $AM = AH - MH = 8-x$).} Using the similarity ratio, we have:
   \[
   \frac{x}{12} = \frac{8 - \frac{2x}{3}}{8}
   \]
\\
6. \textbf{Solve the equation:}
   \[
   \frac{x}{12} = 1 - \frac{x}{12}
   \]
   \[
   \frac{x}{12} + \frac{x}{12} = 1
   \]
   \[
   \frac{2x}{12} = 1
   \]
   \[
   \frac{x}{6} = 1
   \]
   \[
   x = \frac{24}{7} \red{\text{ (Incorrect calculation; from the previous step } \frac{x}{6}=1 \text{, the result should be } x=6\text{.)}}
   \]
\\
Therefore, the edge length of the square DEFG is $\frac{24}{7}$ cm.\\
\textbf{</think>}

The correct answer is \red{$\boxed{C}$}.

\end{promptbox}
\vfill 
\end{figure}

\begin{figure}[!p] 
\centering 
\vfill 
\begin{promptbox}[Generated Sample 1 by VPPO-7B]{mintgreen}{prompt:orange}
\includegraphics[width=0.35\textwidth]{Assets/appendix_sample1.png}

\textbf{Question}: As shown in the figure, a square DEFG model should be cut on a piece of triangle ABC paper. Among them, G and F are on BC, D and E are on AB and AC respectively, AH perpendicular  BC and it intersects DE at M, if BC = 12.0, AH = 8.0, then the edge length of the square DEFG is ()

\textbf{Choices}:
(A) $\frac{24}{5}$cm \quad (B) 4cm \quad (C) $\frac{24}{7}$cm \quad (D) 5cm

\textbf{\green{Ground truth: (A) $\frac{24}{5}$cm}}

\textbf{<think>} \\
To determine the edge length of the square DEFG, we need to use the properties of similar triangles and the given dimensions of the triangle and the square. Here is the step-by-step reasoning:\\
\\
1. \textbf{Identify the similar triangles:}\\
   Since DEFG is a square, DE is parallel to BC, and therefore $\triangle ADE \sim \triangle ABC$ by the AA similarity criterion (Angle-Angle similarity). This gives us the ratio of the corresponding sides of the similar triangles.\\
\\
2. \textbf{Set up the proportion:}\\
   Since $\triangle ADE \sim \triangle ABC$, we have:
   \[
   \frac{DE}{BC} = \frac{AM}{AH}
   \]
   Let the side length of the square DEFG be $x$. Then, $DE = x$ and $AM = AH - MH = 8 - x$. Since $\frac{DE}{BC} = \frac{AM}{AH}$, we can write:
   \[
   \frac{x}{12} = \frac{8 - x}{8}
   \]
\\
3. \textbf{Solve the equation:}\\
   Cross-multiplying gives:
   \[
   8x = 12(8 - x)
   \]
   Expanding and simplifying:
   \[
   8x = 96 - 12x
   \]
   \[
   8x + 12x = 96
   \]
   \[
   20x = 96
   \]
   \[
   x = \frac{96}{20} = \frac{24}{5}
   \]
\\
Therefore, the edge length of the square DEFG is $\frac{24}{5}$ cm.\\
\textbf{</think>}

The correct answer is \green{$\boxed{A}$}.

\end{promptbox}
\vfill 
\end{figure}

\begin{figure}[!p] 
\centering 
\vfill 
\begin{promptbox}[Generated Sample 2 by NoisyRollout-7B]{lightorange}{prompt:orange}
\includegraphics[width=0.35\textwidth]{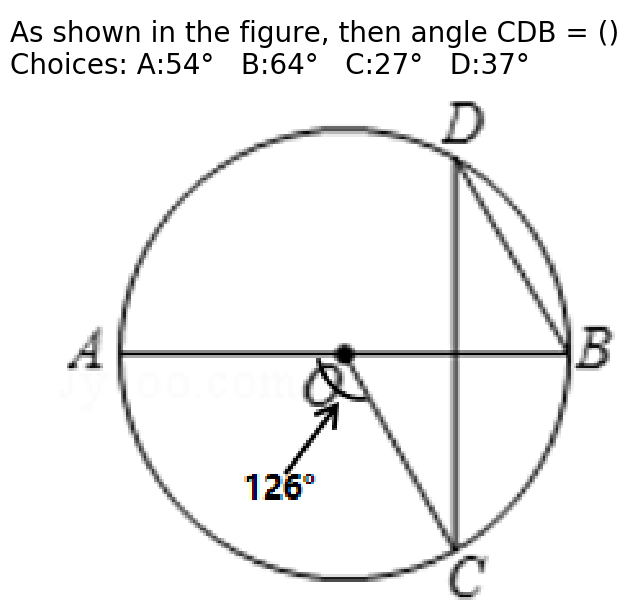}

\textbf{\red{Ground truth: (C) 27°}}

\textbf{<think>} \\
To solve for the measure of angle $\angle CDB$, we need to use the properties of circles and the relationships between the angles.\\
\\
1. Identify the type of angle given: The angle formed by the two radii at the center of the circle is $126^\circ$.\\
\\
2. Use the fact that the angle subtended by an arc at the center is twice the angle subtended by the same arc at any point on the circumference.\\
\\
3. The angle at the center is $126^\circ$, so the angle subtended by the same arc at point C (which is an inscribed angle) would be half of $126^\circ$, which is $63^\circ$. \red{(Misapplication of the inscribed angle theorem; C is an endpoint of the arc, not a vertex viewing the arc.)}\\
\\
4. Since AB is a diameter, angle $\angle ACB$ is a right angle ($90^\circ$).\\
\\
5. Triangle $\triangle BCD$ is a right triangle with angle $\angle ACB$ as $90^\circ$. \red{(Incorrect statement; $\angle ACB$ is in $\triangle ABC$, and it is not an angle of $\triangle BCD$).} The angle $\angle CDB$ is the angle we need to find.\\
\\
The angle $\angle CDB$ is an inscribed angle that intercepts the same arc as the $63^\circ$ angle at the center. Therefore, angle $\angle CDB$ is half of the supplementary angle of $126^\circ$, which is $54^\circ$. \red{(Multiple logical errors: refers to a non-existent 63° central angle, incorrectly relates arcs, and the conclusion contradicts the premise.)}\\
\\
So, angle $\angle CDB = 54^\circ$.\\
\textbf{</think>}

\red{\boxed{54}}.

\end{promptbox}
\vfill 
\end{figure}

\begin{figure}[!p] 
\centering 
\vfill 
\begin{promptbox}[Generated Sample 2 by VPPO-7B]{mintgreen}{prompt:orange}
\includegraphics[width=0.35\textwidth]{Assets/appendix_sample2.png}

\textbf{\green{Ground truth: (C) 27°}}

\textbf{<think>} \\
To determine the measure of angle CDB, we need to use some properties of circles and the relationships between the angles in the circle. Here is the step-by-step reasoning:\\
\\
1. \textbf{Identify the given information and the relevant theorems:}\\
   - We are given that the measure of the central angle $\angle AOC$ is $126^\circ$. According to the central angle theorem, the measure of the central angle is twice the measure of the inscribed angle that subtends the same arc.\\
   - The measure of the inscribed angle $\angle ABC$ that subtends the same arc as the central angle $\angle AOC$ is half of the central angle. Therefore, $\angle ABC = \frac{1}{2} \times 126^\circ = 63^\circ$.\\
\\
2. \textbf{Use the fact that the sum of the interior angles in a quadrilateral is 360°:}\\
   - Since AB is the diameter of the circle, $\angle ACB = 90^\circ$ (angle in a semi-circle). Therefore, in $\triangle ABC$, we have:
     \[
     \angle BAC = 180^\circ - \angle ABC - \angle ACB = 180^\circ - 63^\circ - 90^\circ = 27^\circ.
     \]
\\
3. \textbf{Relate the angles in the circle:}\\
   - Since $\angle BAC$ and $\angle CDB$ are angles that subtend the same arc $BC$, they are equal. Therefore, $\angle CDB = \angle BAC = 27^\circ$.\\
\\
So, the measure of angle CDB is $\boxed{C}$.\\
\textbf{</think>}

The correct answer is \green{$\boxed{C}$}.

\end{promptbox}
\vfill 
\end{figure}

\begin{figure}[!p] 
\centering 
\vfill 
\begin{promptbox}[Generated Sample 3 by VL-Rethinker-7B]{lightorange}{prompt:orange}
\includegraphics[width=0.35\textwidth]{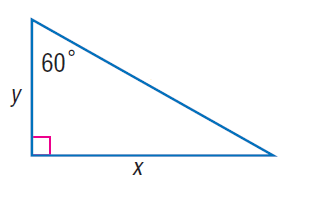}

\textbf{Question}: In the triangle with hypotenuse length 20, find y.

\textbf{Choices}: (A) 4 \quad (B) 6 \quad (C) 8 \quad (D) 10

\textbf{\red{Ground truth: (D) 10}}

\textbf{<think>} \\
To find the value of $y$ in the given right triangle, we can use the trigonometric relationship for the sine function \red{(Incorrect function; cosine should be used as y is the adjacent side).} The sine of an angle in a right triangle is defined as the length of the opposite side divided by the length of the hypotenuse.\\
\\
Given:\\
- The hypotenuse $h = 20$\\
- The angle $\theta = 60^\circ$\\
- The opposite side to the angle $\theta$ is $y$ \red{(Incorrect identification; y is the side adjacent to the 60° angle, not opposite.)}\\
\\
The sine of $60^\circ$ is given by:
\[ \sin(60^\circ) = \frac{y}{20} \] \red{(Incorrect formula based on the wrong side identification.)}
\\
We know that $\sin(60^\circ) = \frac{\sqrt{3}}{2}$. Therefore:
\[ \frac{\sqrt{3}}{2} = \frac{y}{20} \]
\\
To solve for $y$, we can multiply both sides of the equation by 20:
\[ y = 20 \times \frac{\sqrt{3}}{2} \]
\[ y = 10\sqrt{3} \] \red{(This is the value for x, the opposite side, not y.)}
\\
Next, we simplify the expression. Using the approximate value of $\sqrt{3} \approx 1.732$:
\[ y \approx 10 \times 1.732 = 17.32 \]
\\
However, we can now check the answer choices. The value $10\sqrt{3} \approx 17.32$ matches closest the provided options. \red{(Illogical conclusion; 17.32 is not close to the integer options, and the chosen answer `C' (8) does not follow from this value.)}\\
\textbf{</think>}

\red{\boxed{C}}.

\end{promptbox}
\vfill 
\end{figure}

\begin{figure}[!p] 
\centering 
\vfill 
\begin{promptbox}[Generated Sample 3 by VPPO-7B]{mintgreen}{prompt:orange}
\includegraphics[width=0.35\textwidth]{Assets/appendix_sample3.png}

\textbf{Question}: In the triangle with hypotenuse length 20, find y.

\textbf{Choices}: (A) 4 \quad (B) 6 \quad (C) 8 \quad (D) 10

\textbf{\green{Ground truth: (D) 10}}

\textbf{<think>} \\
To find the length of the side $y$ in the right triangle where the hypotenuse is 20 and the angle is 60 degrees, we can use the properties of a 30-60-90 triangle. In a 30-60-90 triangle, the sides are in the ratio $1 : \sqrt{3} : 2$, where the hypotenuse is the longest side and is twice the length of the side opposite the 30-degree angle.\\
\\
Here, the hypotenuse is 20, which is twice the length of the side opposite the 30-degree angle. Therefore, the side opposite the 30-degree angle is half of the hypotenuse. Let's go through the steps:\\
\\
1. Identify the given information: the hypotenuse is 20 and the angle is 60 degrees.\\
2. In a right triangle with a 60-degree angle, the side opposite the 30-degree angle is half the hypotenuse. Here, the side $y$ is the side opposite the 30-degree angle in the 30-60-90 triangle formed by the given angle.\\
\\
The relationship in a 30-60-90 triangle is:
\[
y = \frac{1}{2} \times \text{hypotenuse}
\]
\[
y = \frac{1}{2} \times 20
\]
\[
y = 10
\]
\\
Therefore, the length of $y$ is 10.\\
\textbf{</think>}

The correct answer is \green{$\boxed{D}$}.

\end{promptbox}
\vfill 
\end{figure}

\end{document}